\documentclass[fleqn,12pt,a4paper]{paper}

\usepackage{amssymb,amsmath,amsthm}
\usepackage{mathtools}
\usepackage{enumitem}
\usepackage{nicefrac}

\renewcommand\vec{\boldsymbol}


\newtheorem{theorem}{T{\sc heorem}}[section]
\newtheorem{corollary}{C{\sc orollary}}[section]
\newtheorem{lemma}{L{\sc emma}}[section]
\newtheorem{exercise}{E{\sc xercise}}[section]

\usepackage{hyperref}

\hypersetup{
  colorlinks  = true,     
  urlcolor    = darkgray, 
  linkcolor   = blue,     
  citecolor   = red       
}

\renewcommand{\emph}[1]{\textbf{#1}}
\newcommand{\pow}[1]{2^{#1}}
\newcommand{\partto}{\mathrel{\rightharpoonup}}
\newcommand{\abs}[1]{\left|#1\right|}
\newcommand{\norm}[2]{\left\lVert#1\right\rVert_{#2}}
\newcommand{\dtv}[2]{d_{\mathrm{TV}}\left(#1,#2\right)}
\newcommand{\drat}[2]{d_{\mathrm{rat}}\left(#1,#2\right)}
\DeclareMathOperator*{\Ex}{\mathrm{E}}
\newcommand{\pmin}{p_{\min}}
\DeclareMathOperator*{\argmax}{argmax}

\begin{document}

\author{Guillermo A. P\'erez\\
Universiteit Antwerpen}
\title{Partially Known MDPs}

\maketitle

\begin{abstract}
    In these notes we will tackle the problem of finding optimal
    policies for Markov decision processes (MDPs) which are not
    fully known to us. Our intention is to slowly transition from
    an offline setting to an online (learning) setting. Namely,
    we are moving towards \emph{reinforcement learning}.
\end{abstract}

\tableofcontents
\newpage

\section{Preliminaries}
As a reminder, a (stationary) MDP $\mathcal{M}$ is a $4$-tuple
$(S,A,P,r)$ where:
\begin{itemize}
    \item $S$ is a finite set of states,
    \item $A$ is a finite set of actions. In an abuse of notation, we also write $A(\cdot)$ to denote a mapping $S \to \pow{A}$ from states $i$ to a finite set $A(i) \subseteq A$ of available actions.
    \item $P \colon S \times A \times S \partto [0,1]$
        is a (time-independent) probabilistic transition (partial) function
        defined for all $i,j \in S$ and all $a \in A(i)$,
        and
    \item $r \colon S \times A \partto \mathbb{Q}$ is a     
        (time-independent) immediate reward (partial) 
        function defined for all $i \in S$ and all $a \in A(i)$.
\end{itemize}
We will still make use of the abbreviated notation $p_{ij}(a)$
for $P(i,a,j)$ and $r_i(a)$ for $r(i,a)$.

\subsection{Policies}
Recall that a \emph{stationary policy} $\pi^\infty$ is a sequence
$(\pi,\pi,\dots)$ where $\pi$ is a non-negative function on $S 
\times A : (i,a) \mapsto \pi_{ia}$ such that $\sum_{a \in A(i)} \pi_{ia} = 1$. Indeed, $\pi$ assigns to each $i$ a distribution over available actions.

\paragraph{Unichain (stationary) policies:} Consider the
directed graph $G(\pi^\infty) = (S,E)$ with $E \subseteq S \times S$ such that $(i,j) \in E$ if and only if $p_{ij}(a) \pi_{ia} > 0$ for some $a \in A(i)$.
We call this graph the \emph{support graph} of $\pi^\infty$. 
The edges of $G(\pi^\infty)$ represent
transitions of the MDP that have nonzero probability when
following $\pi^\infty$. We then say $\pi^\infty$ is a \emph{unichain policy from a given state $i \in S$} if the subgraph of $G(\pi^\infty)$ induced by the subset of vertices reachable from $i$ has a single maximal nontrivial strongly
connected component. (We omit references to $i$ when it is clear from the context.) For intuition, this is just a graph-based way of saying that the reachable part of the Markov chain induced by the policy has a single closed communicating class.

\begin{exercise}
    Let $\mathcal{M}$ be a communicating MDP and $i$ one of its states. Prove that there is an optimal unichain policy from $i$ for the infinite-horizon
    expected limit-average-reward criterion.
\end{exercise}

\section{Robustness of the infinite-horizon value functions} \label{sec:robustness}
In this section we will be interested in how the values of policies in
two MDPs relate to each other. In particular, we will consider
two MDPs on the same state space $S$ and action space $A$. Formally,
let us fix $\mathcal{M}_1 = (S,A,P^{(1)},r^{(1)})$ and
$\mathcal{M}_2 = (S,A,P^{(2)},r^{(2)})$. One would expect that if
$P^{(1)}$ and $P^{(2)}$, and $r^{(1)}$ and $r^{(2)}$, are not ``too
different'' then the values of a policy followed in both MDPs should not
differ by too much.

\subsection{Distance measures}
Let $f,g : A \partto [0,1]$ be two real-valued partial functions. Below, to simplify notation, we suppose $f(a) > 0$ does not hold for $a \in A$ if $f(a)$ is undefined.
We write $\dtv{f}{g}$ to denote their
\emph{total-variation distance}, i.e.
\[
    \sup_{a \in A}\{\abs{f(a) - g(a)} : f(a),g(a) > 0\}.
\]
Similarly, we write $\drat{f}{g}$ to denote
the \emph{ratio distance}: 
\[
    \sup_{a \in A}\left\{\abs{\frac{f(a)}{g(a)}},\abs{\frac{g(a)}{f(a)}} : f(a),g(a) > 0 \right\} - 1.
\]

\begin{exercise}
    Prove that $\dtv{\cdot}{\cdot}$ is a metric and that $\drat{\cdot}{\cdot}$ is not.
\end{exercise}

\subsection{Expected total discounted rewards}
Let $\alpha \in (0,1)$ be a \emph{discount factor}.
Recall that the expected total $\alpha$-discounted reward, given initial state $i$ and policy $R$ is:
\[ 
    v_i^\alpha(R) \coloneqq \Ex_{i,R}\left[\sum_{t=1}^\infty \alpha^{t-1} r_{X_t}(Y_t)\right], 
\]
where $X_t$ denotes the random variable representing the state at time $t$ and $Y_t$ the chosen action at time $t$.

For convenience, in this section we will fix some initial state $i \in S$
and write $v^\alpha$ instead of $v_i^\alpha$. To avoid ambiguities, we will
write
$v_1^\alpha$ to denote the value function for $\mathcal{M}_1$ and
$v_2^\alpha$ to denote the value function for $\mathcal{M}_2$. Furthermore,
for a stationary policy $\pi^\infty$
we will write $G_1(\pi^\infty)$ and $G_2(\pi^\infty)$ to denote the 
support graph of $\pi^\infty$ in $\mathcal{M}_1$ and $\mathcal{M}_2$
respectively.

In the sequel we will prove that, under certain conditions,
the value of any stationary policy from $\mathcal{M}_1$ when
followed in $\mathcal{M}_2$
is ``off'' by a small $\varepsilon$ only. Below, we write $\norm{r^{(1)}}{\infty}$ to denote the value $\max_{j \in S} \max_{a \in A(j)} \abs{r^{(1)}_j(a)}$.

\begin{theorem}\label{thm:disc-robust}
    Let $\alpha,\varepsilon \in (0,1)$. Then for all stationary policies
    $\pi^\infty$ we have that
    \(
        (1-\alpha)\abs{v^\alpha_1(\pi^\infty) - v^\alpha_2(\pi^\infty)}
        \leq \frac{\varepsilon}{2} + \frac{\varepsilon}{2}\norm{r^{(1)}}{\infty}
    \)
    if the following hold.
    \begin{enumerate}[label={(A\arabic*)}]
        \item $G_1(\pi^\infty) = G_2(\pi^\infty)$ \label{itm:supp}
        \item $\drat{P^{(1)}}{ P^{(2)}} \leq \frac{\varepsilon}{4\exp(2|S|)}$ \label{itm:drat-prob}
        \item $\dtv{r^{(1)}}{r^{(2)}} \leq \frac{\varepsilon}{2}$ \label{itm:dtv-reward}
    \end{enumerate}
\end{theorem}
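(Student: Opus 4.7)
The plan is to follow the standard ``simulation lemma'' strategy: view both value functions as fixed points of their respective policy-induced Bellman operators, derive an algebraic identity for the difference, and then invoke \ref{itm:supp}--\ref{itm:dtv-reward} to bound each piece. Concretely, I would introduce the policy-induced transition matrix $Q_k$ with $(Q_k)_{ij}=\sum_{a}\pi_{ia}\,p^{(k)}_{ij}(a)$ and the policy-induced reward vector $r_k$ with $r_k(i)=\sum_{a}\pi_{ia}\,r^{(k)}_i(a)$, so that the standard discounted-value formula gives $v_k^\alpha(\pi^\infty) = (I-\alpha Q_k)^{-1} r_k$. Adding and subtracting $\alpha Q_1 v_2^\alpha$ in the difference of the two fixed-point equations yields the identity
\[
  v_1^\alpha - v_2^\alpha \;=\; (I-\alpha Q_1)^{-1}\bigl[(r_1-r_2) + \alpha (Q_1-Q_2)\,v_2^\alpha\bigr].
\]

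The key observation that places the prefactor $(1-\alpha)$ cleanly on the left is that $M := (1-\alpha)(I-\alpha Q_1)^{-1} = \sum_{t\ge 0}(1-\alpha)\alpha^t Q_1^t$ is itself a right-stochastic matrix (non-negative entries, rows summing to $1$), hence $\norm{Mu}{\infty}\le\norm{u}{\infty}$ for every vector $u$. This reduces the task to
\[
  (1-\alpha)\,\norm{v_1^\alpha - v_2^\alpha}{\infty}\;\le\;\norm{r_1-r_2}{\infty} + \alpha\,\norm{(Q_1-Q_2)v_2^\alpha}{\infty}.
\]
The two pieces are then bounded using the assumptions: \ref{itm:supp} ensures $r^{(2)}_i(a)$ is defined whenever $\pi_{ia}>0$, and \ref{itm:dtv-reward} controls $\abs{r^{(1)}_i(a)-r^{(2)}_i(a)}\le\varepsilon/2$, which averages to $\norm{r_1-r_2}{\infty}\le\varepsilon/2$; while \ref{itm:supp} and \ref{itm:drat-prob} together give $\abs{p^{(1)}_{ij}(a)-p^{(2)}_{ij}(a)}\le\frac{\varepsilon}{8\abs{S}}\,p^{(2)}_{ij}(a)$ for every relevant triple, which after averaging under $\pi$ promotes to $\abs{(Q_1-Q_2)_{ij}}\le\frac{\varepsilon}{8\abs{S}}(Q_2)_{ij}$. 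Plugging these in, together with $\norm{v_2^\alpha}{\infty}\le\norm{r^{(2)}}{\infty}/(1-\alpha)\le(\norm{r^{(1)}}{\infty}+\varepsilon/2)/(1-\alpha)$, is intended to deliver the target inequality.

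The step I expect to be the main obstacle is exactly the bound on $\norm{(Q_1-Q_2)v_2^\alpha}{\infty}$. The crude entrywise estimate $\le\frac{\varepsilon}{8\abs{S}}\norm{v_2^\alpha}{\infty}$, combined with the worst-case sup-norm bound on $v_2^\alpha$, introduces a second factor of $1/(1-\alpha)$, whereas the right-hand side of the theorem is free of any such factor. Closing this gap seems to require exploiting that the row sums of $Q_1-Q_2$ vanish, so that $(Q_1-Q_2)v_2^\alpha$ depends only on the \emph{span} of $v_2^\alpha$ rather than its sup-norm, together with \ref{itm:supp} to confine the perturbation to the edges actually traversed under $\pi^\infty$. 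The $\abs{S}$ in the denominator of \ref{itm:drat-prob} is the strong hint that $\abs{S}$ must enter precisely at this refinement; that is where I would spend most of the effort when turning this sketch into a complete proof.
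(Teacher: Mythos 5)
Your setup is sound as far as it goes: the identity $v_1^\alpha-v_2^\alpha=(I-\alpha Q_1)^{-1}\bigl[(r_1-r_2)+\alpha(Q_1-Q_2)v_2^\alpha\bigr]$ is correct, $(1-\alpha)(I-\alpha Q_1)^{-1}$ is indeed stochastic, and the reward term correctly yields the $\nicefrac{\varepsilon}{2}$ summand. But the obstacle you flag at the end is not a refinement to be supplied later; it is fatal to this route, and the fix you propose does not close it. The span of $v_2^\alpha$ is itself of order $\norm{r^{(2)}}{\infty}(1-\alpha)^{-1}$ for general stationary policies (consider a chain with two recurrent classes of different reward, or a state that reaches a rewarding absorbing state only with small probability), so replacing $\norm{v_2^\alpha}{\infty}$ by $\tfrac{1}{2}\mathrm{span}(v_2^\alpha)$ still leaves $\alpha\norm{(Q_1-Q_2)v_2^\alpha}{\infty}\le\tfrac{\varepsilon}{8|S|}\cdot\tfrac{\norm{r^{(2)}}{\infty}}{1-\alpha}$, hence a final bound of the form $\tfrac{\varepsilon}{2}+\tfrac{\varepsilon\norm{r^{(2)}}{\infty}}{8|S|(1-\alpha)}$ on $(1-\alpha)\abs{v_1^\alpha-v_2^\alpha}$. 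That matches the claimed bound only when $1-\alpha\ge\tfrac{1}{4|S|}$ and blows up as $\alpha\to1$ --- precisely the regime in which the theorem is used (\autoref{thm:mp-robust} is obtained by letting $\alpha\to1$). This is the well-known $(1-\alpha)^{-2}$ barrier of the additive ``simulation lemma'': a first-order, per-time-step telescoping cannot exhibit the cancellations that make a \emph{relative} perturbation of the transition probabilities so much more benign than an additive one of the same size.

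The paper's proof avoids the telescoping entirely. After shifting rewards to be non-negative, $w^\alpha(\pi^\infty)$ is written (via the Freidlin--Wentzell representation, \autoref{lem:rational-quot}) as a ratio of two polynomials of degree at most $|S|$ with non-negative coefficients in the transition probabilities; \autoref{lem:poly-bnd} then gives the multiplicative bound $(1+\delta)^{-2|S|}\le\nicefrac{w_1^\alpha}{w_2^\alpha}\le(1+\delta)^{2|S|}$ with $\delta=\drat{P^{(1)}}{P^{(2)}}$, and only at the very last step is this converted into an additive bound using $w_2^\alpha\le2\norm{r^{(1)}}{\infty}(1-\alpha)^{-1}$. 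The exponent $2|S|$ --- the polynomial degree, not the horizon --- is what the $8|S|$ in \autoref{itm:drat-prob} is there to cancel, and it is independent of $\alpha$. To complete your write-up you should switch to that multiplicative argument; staying within the additive framework would force you to strengthen \autoref{itm:drat-prob} to a bound depending on $1-\alpha$, which proves a different and strictly weaker statement.
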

A proof of the theorem is given in the following subsection. The argument presented there requires some further definitions and some intermediate results. Later, we will state and prove a similar result for the limit-average reward. For now, it is interesting to note that the following is an immediate corollary of the theorem that applies when rewards are non-negative values from $[0,1]$.
\begin{corollary}
    Let $\alpha,\varepsilon \in (0,1)$ and suppose 
    $r^{(1)}_j(a) \in [0,1]$ for all $j \in S$ and all $a \in A(j)$. Then for all stationary policies
    $\pi^\infty$ we have that
    \(
        (1-\alpha)\abs{v^\alpha_1(\pi^\infty) - v^\alpha_2(\pi^\infty)}
        \leq \varepsilon
    \)
    if \autoref{itm:supp}, \autoref{itm:drat-prob}, and \autoref{itm:dtv-reward} hold.
\end{corollary}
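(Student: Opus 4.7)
The plan is to invoke \autoref{thm:disc-robust} directly; the corollary is really just a specialization of that theorem under the extra hypothesis that rewards lie in $[0,1]$. Since the three assumptions \autoref{itm:supp}, \autoref{itm:drat-prob}, and \autoref{itm:dtv-reward} are carried over verbatim, the conclusion of the theorem applies to $\pi^\infty$, yielding
\[
    (1-\alpha)\abs{v^\alpha_1(\pi^\infty) - v^\alpha_2(\pi^\infty)}
    \leq \frac{\varepsilon}{2} + \frac{\varepsilon}{2}\norm{r^{(1)}}{\infty}.
\]

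The only step that needs justification is the bound $\norm{r^{(1)}}{\infty} \leq 1$. By the definition $\norm{r^{(1)}}{\infty} = \max_{j \in S} \max_{a \in A(j)} \abs{r^{(1)}_j(a)}$, and by the hypothesis that $r^{(1)}_j(a) \in [0,1]$ for every $j$ and every $a \in A(j)$, we immediately have $\abs{r^{(1)}_j(a)} = r^{(1)}_j(a) \leq 1$ for all such pairs, so $\norm{r^{(1)}}{\infty} \leq 1$. Substituting into the theorem's bound gives $\frac{\varepsilon}{2} + \frac{\varepsilon}{2}\cdot 1 = \varepsilon$, which is exactly the claimed inequality.

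There is essentially no obstacle here; the argument is a one-line substitution. The only thing worth pausing on is the mild notational point that the assumption is phrased as $r^{(1)}_j(a) \in [0,1]$ (non-negative rewards), so the absolute value in the definition of $\norm{\cdot}{\infty}$ can be dropped for free. No additional lemmas, no case analysis, and no reliance on the assumptions beyond what the theorem already uses are required.
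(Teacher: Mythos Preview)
Your proposal is correct and matches the paper's own treatment: the paper states the corollary as an immediate consequence of \autoref{thm:disc-robust} under the extra hypothesis $r^{(1)}_j(a)\in[0,1]$, which forces $\norm{r^{(1)}}{\infty}\leq 1$, and does not spell out a separate proof. Your one-line substitution is exactly what is intended.
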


\begin{exercise}
    Argue that \autoref{itm:supp} in \autoref{thm:disc-robust} is necessary. That is,
    construct MDPs that satisfy the other conditions but not \autoref{itm:supp} and such that policy values are not robust.
\end{exercise}

\subsection{A proof of \autoref{thm:disc-robust}}
We present an argument due to Eilon Solan~\cite{solan03} and later refined by Krishnendu Chatterjee~\cite{chatterjee12}. The crux of the proof lies in showing that $v_1^\alpha$ and $v_2^\alpha$ have a ``closed form'' as a quotient of polynomials. That is, if we see the transition probabilities $p_{ij}(a)$ as variables $x_{ij}^a$, then the expected total $\alpha$-discounted reward is given by the quotient of two polynomial functions.
\begin{lemma}\label{lem:rational-quot}
    For all stationary policies $\pi^\infty$ and all $\alpha \in (0,1)$ there exist polynomials $f(\vec{x}),g(\vec{x})$
    of degree at most $|S|$, with non-negative coefficients, and such that:
    \(
        v_1^\alpha(\pi^\infty) = \nicefrac{f(P^{(1)})}{g(P^{(1)})} \text{ and }
        v_2^\alpha(\pi^\infty) = \nicefrac{f(P^{(2)})}{g(P^{(2)})}.
    \)
\end{lemma}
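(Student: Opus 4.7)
The plan is to translate the problem into linear algebra via the Bellman equation, apply Cramer's rule to obtain polynomials of degree at most $|S|$, and then use row-stochasticity to recognise these polynomials as manifestly non-negative sums over spanning in-forests of an augmented graph.

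Concretely, fix $\pi^\infty$ and introduce indeterminates $x_{ij}^a$ for the transition probabilities and $y_i^a$ for the rewards, setting $P^\pi_{ij} = \sum_a \pi_{ia} x_{ij}^a$ and $\rho^\pi_i = \sum_a \pi_{ia} y_i^a$. The Bellman equation gives $v^\alpha(\pi^\infty) = (I - \alpha P^\pi)^{-1} \rho^\pi$, so by Cramer's rule every coordinate is a ratio of two polynomials of degree at most $|S|$ in $\vec{x} = (x_{ij}^a, y_i^a)$.

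The main obstacle is non-negativity of the coefficients, which is not visible from the signed cofactor expansion. To handle it, I would substitute the row-sum identity $x_{ii}^a = 1 - \sum_{j \neq i} x_{ij}^a$, so that $(I - \alpha P^\pi)_{ii} = (1-\alpha) + \alpha \sum_{j \neq i} P^\pi_{ij}$ becomes a non-negative combination of the remaining variables. Fully expanding $\det(I - \alpha P^\pi)$ over all permutations $\sigma$ and, for each fixed point of $\sigma$, over the two kinds of summand in the diagonal factor (either $(1-\alpha)$ or an $\alpha P^\pi_{ij}$ with $j \neq i$), and then regrouping by a single function $\varphi \colon S \to S \cup \{*\}$ with $\varphi(i) \neq i$, one finds that the permutations $\sigma$ compatible with a given $\varphi$ are parametrised by subsets $T$ of the cycles of $\varphi$; an inclusion-exclusion gives $\sum_T (-1)^{|T|} = 0$ whenever $\varphi$ has any cycle. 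The survivors are exactly the spanning in-forests of $S \cup \{*\}$ rooted at $*$, each contributing a product of $|S|$ non-negative factors of the form $(1-\alpha)$ or $\alpha P^\pi_{ij}$; this exhibits $g = \det(I - \alpha P^\pi)$ as a polynomial of degree at most $|S|$ with non-negative coefficients. Running the same inclusion-exclusion on Cramer's rule for the numerator replaces one forest edge by the reward weight $\rho^\pi_{j^*}$ at a distinguished vertex, producing $f$ with the same degree bound and non-negativity.

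The step I expect to be the main obstacle is the inclusion-exclusion cancellation: one must carefully enumerate, for each $\varphi$, the permutations $\sigma$ compatible with it and verify that the signed contributions collapse on non-acyclic $\varphi$ (this is essentially the matrix-forest theorem for directed graphs). Once this is in place the lemma is immediate, since $f$ and $g$ are universal polynomial expressions in $\vec{x}$: evaluating at the parameters $(P^{(k)}, r^{(k)})$ of $\mathcal{M}_k$ yields $v_k^\alpha(\pi^\infty)$ for $k = 1, 2$.
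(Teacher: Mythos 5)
Your argument is correct, and it reaches the same combinatorial object as the paper by a genuinely different route. The paper never touches a determinant: it reduces the discounted value to hitting probabilities in an augmented chain $\mathcal{C}$ (each state jumps to an absorbing copy with probability $1-\alpha$) and then invokes the Freidlin--Wentzell forest formula (\autoref{lem:reach-poly}, cited without proof) to get positivity of the coefficients. You instead start from Cramer's rule --- which the paper explicitly dismisses as insufficient because the signed cofactor expansion hides the positivity --- and recover positivity by proving the directed matrix-forest theorem from scratch: after the row-stochastic substitution, $I-\alpha P^\pi$ is the grounded Laplacian of the graph on $S\cup\{*\}$ with edge weights $\alpha P^\pi_{ij}$ and $1-\alpha$ for the edge to $*$, and your cycle-subset inclusion-exclusion (the signs $\mathrm{sgn}(\sigma)(-1)^{|N_\sigma|}=(-1)^{\#\text{cycles}}$ do collapse correctly) leaves exactly the spanning in-trees rooted at $*$. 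Note that your augmented vertex $*$ with weight $1-\alpha$ is precisely the paper's chain $\mathcal{C}$ with the copy states merged, so the two proofs compute the same forest sum; what your approach buys is self-containedness (no external citation for \autoref{lem:reach-poly}) and a single determinant computation in place of the detour through $\mathrm{MT}^\alpha_{i,j}$, at the cost of having to carry out the all-minors version of the cancellation for the numerator cofactors, which you rightly flag as the remaining work. One shared caveat: with the rewards folded into the coefficients of $f$, non-negativity of those coefficients requires $r\geq 0$; the paper's proof of \autoref{lem:rational-quot} has exactly the same gap, which is repaired only later in the proof of \autoref{thm:disc-robust} by shifting the rewards, so you should state that assumption explicitly rather than inherit it silently.
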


Using the bounds on the degree of the polynomials, together with the non-negativity of their coefficients, one can prove the following bound from which \autoref{thm:disc-robust} will (almost immediately) follow. For convenience, we write below $a \leq \nicefrac{b}{c} \leq d$ to denote $ac \leq b$ and $b \leq dc$. (Intuitively $\nicefrac{0}{0}$ can be seen as $1$ in such inequalities.)
\begin{lemma}\label{lem:poly-bnd}
    Let $f(x_1,\dots,x_n)$ be a polynomial function of degree $d$ with non-negative coefficients, $\varepsilon \in \mathbb{R}$ with $\varepsilon > 0$, and $\vec{a},\vec{b} \in \mathbb{R}^n$ with $\vec{a},\vec{b} \geq \vec{0}$. If $(1+\varepsilon)^{-1} \leq \nicefrac{\vec{a}_i}{\vec{b}_i} \leq 1 + \varepsilon$ for all $1 \leq i \leq n$ then $(1+\varepsilon)^{-d} \leq \nicefrac{f(\vec{a})}{f(\vec{b})} \leq (1 + \varepsilon)^d$.
\end{lemma}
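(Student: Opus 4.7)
The plan is to prove the bound monomial by monomial, then aggregate. Since $f$ has non-negative coefficients and degree at most $d$, we can write
\[
    f(\vec{x}) = \sum_{\alpha} c_\alpha \vec{x}^{\alpha},
\]
where the sum ranges over multi-indices $\alpha = (\alpha_1,\dots,\alpha_n)$ with $|\alpha| = \sum_i \alpha_i \leq d$, each coefficient $c_\alpha \geq 0$, and $\vec{x}^\alpha$ denotes $\prod_i x_i^{\alpha_i}$. The point of this normal form is that each monomial factors multiplicatively, and the hypothesis on the coordinates is also multiplicative, so we can just multiply the per-coordinate bounds together.

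Next, for any fixed $\alpha$ I would compute
\[
    \frac{\vec{a}^\alpha}{\vec{b}^\alpha} = \prod_{i=1}^n \left(\frac{a_i}{b_i}\right)^{\alpha_i}.
\]
Using the hypothesis $(1+\varepsilon)^{-1} \leq a_i / b_i \leq 1 + \varepsilon$ coordinate-wise, the product is sandwiched between $(1+\varepsilon)^{-|\alpha|}$ and $(1+\varepsilon)^{|\alpha|}$; since $|\alpha| \leq d$ and $1 + \varepsilon \geq 1$, this in turn lies between $(1+\varepsilon)^{-d}$ and $(1+\varepsilon)^d$. Multiplying by the non-negative quantity $c_\alpha \vec{b}^{\alpha}$ preserves inequalities, so each monomial satisfies
\[
    (1+\varepsilon)^{-d}\, c_\alpha \vec{b}^{\alpha} \;\leq\; c_\alpha \vec{a}^{\alpha} \;\leq\; (1+\varepsilon)^{d}\, c_\alpha \vec{b}^{\alpha}.
\]

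Summing these inequalities over all $\alpha$ collapses both sides into $f$ evaluated at the appropriate point, yielding $(1+\varepsilon)^{-d} f(\vec{b}) \leq f(\vec{a}) \leq (1+\varepsilon)^d f(\vec{b})$, which is exactly the claimed bound on the ratio. The only delicate point is the degenerate case $f(\vec{b}) = 0$: since all $c_\alpha \geq 0$ and (implicitly in the hypothesis) all $b_i > 0$, $f(\vec{b}) = 0$ forces every $c_\alpha = 0$, hence $f \equiv 0$ and $f(\vec{a}) = 0$ as well, so the inequality holds under the $\nicefrac{0}{0} = 1$ convention stated before the lemma. I do not expect any real obstacle here; the tightness of the exponent $d$ (rather than something like $nd$) is precisely what the multi-index formulation buys us, and the assumption of non-negative coefficients is what makes the per-monomial bounds aggregate cleanly rather than suffering cancellation.
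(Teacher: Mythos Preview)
Your argument is correct and is the standard one: bound each monomial multiplicatively using the coordinate-wise hypothesis, then sum using non-negativity of the coefficients. The paper does not actually supply a proof of this lemma --- it is left as an exercise --- so there is nothing to compare against; your write-up would serve perfectly well as the intended solution. One tiny remark: your claim that ``all $b_i > 0$'' is implicit in the hypothesis is not quite accurate under the paper's convention (which interprets $a \leq \nicefrac{b}{c} \leq d$ as $ac \leq b$ and $b \leq dc$, allowing $c = 0$), but this is harmless since your summed inequality $(1+\varepsilon)^{-d} f(\vec{b}) \leq f(\vec{a}) \leq (1+\varepsilon)^d f(\vec{b})$ is already the conclusion in that convention, with no division required.
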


\begin{exercise}
    Prove the bound.
\end{exercise}

In the following subsections we develop auxiliary lemmas that allow us to prove~\autoref{thm:disc-robust}. Then, in~\autoref{sec:proof}, we provide the global argument which makes use of said lemmas. The starting point for the argument is the observation that the \emph{reachability probability} is
a rational function --- i.e. a quotient of polynomial functions. This is made precise in the sequel. 

\subsubsection{A rational hitting function} Let $(S,\delta)$ be a Markov chain with $S$ the states and $\delta \colon S \to [0,1]$ the probabilistic transition function. We will be interested in the probability of the chain hitting states from $Q \subset S$. Denote by $\tau(Q) \coloneqq \inf\{t \in \mathbb{N} \mid X_t \in Q\}$ the first $Q$-hit time. Note that $\tau(Q)$ is a random variable whose value ranges from $0$ to $+\infty$. Write $\mathcal{F}$ to denote the set of functions $\{f : S \setminus Q \to S\}$ and let $G_f = (S,E_f)$ be the directed graph where $(i,j) \in E$ if and only if $f(i) = j$. For every $f \in \mathcal{F}$ we define $\alpha_f \in \{0,1\}$ and $\beta_f \colon S \times S \to \{0,1\}$ as follows:
\begin{itemize}
    \item We set $\alpha_f$ to $1$ if $G_f$ is a directed acyclic graph and to $0$ otherwise;
    \item We set $\beta_f(i,j)$ to $1$ if the unique path starting from $i$ in $G_f$ visits $j$ and to $0$ otherwise.
\end{itemize}

The following is usually attributed to Freidlin and Wentzell~\cite{fw98}. This specific version of the claim is explicitly proved in~\cite{cc97}.
\begin{lemma}\label{lem:reach-poly}
    If $\Pr_i(\tau(Q) < +\infty) > 0$ for all $i \in S$ then for all $j \not\in Q$ and all $k \in Q$ we have that $\Pr_j(X_{\tau(Q)} = k)$ is equal to the following.
    \[
        \frac{\sum_{f \in \mathcal{F}} \beta_f(j,k) \prod_{\ell \not\in Q} \delta(\ell,f(\ell))}{\sum_{f \in \mathcal{F}} \alpha_f \prod_{\ell \not\in Q} \delta(\ell,f(\ell))}
    \]
\end{lemma}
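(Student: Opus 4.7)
The plan is to characterize $p_{jk} \coloneqq \Pr_j(X_{\tau(Q)} = k)$ as the unique solution of a linear system, apply Cramer's rule, and then identify each resulting determinant with the claimed combinatorial sum via the directed Matrix-Tree Theorem (of which this lemma is essentially a specialization to hitting probabilities).

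First, one-step analysis on the chain yields
\[
    p_{jk} \;=\; \delta(j, k) + \sum_{\ell \in S \setminus Q} \delta(j, \ell)\, p_{\ell k}
    \qquad (j \in S \setminus Q),
\]
or equivalently $(I - A)\vec{p}_k = \vec{b}_k$, where $A$ is the $(S \setminus Q) \times (S \setminus Q)$ restriction of the transition kernel and $(\vec{b}_k)_j = \delta(j, k)$. The hypothesis $\Pr_i(\tau(Q) < +\infty) > 0$ for every $i$ forces the spectral radius of $A$ to be strictly less than $1$, so $I - A$ is invertible and the system admits a unique solution. Cramer's rule then yields $p_{jk} = \det(M_{jk}) / \det(I - A)$, where $M_{jk}$ is $I - A$ with column $j$ replaced by $\vec{b}_k$.

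Second, identify each determinant combinatorially. By stochasticity the diagonal entry $(I - A)_{\ell\ell} = 1 - \delta(\ell, \ell) = \sum_{m \in S \setminus \{\ell\}} \delta(\ell, m)$ decomposes into the total weight of out-edges leaving $\ell$ to either another state of $S \setminus Q$ or to $Q$, so $I - A$ is exactly the reduced Laplacian of the weighted digraph obtained by collapsing $Q$ into a single virtual sink. The directed Matrix-Tree Theorem therefore expresses $\det(I - A)$ as the total weight of spanning in-arborescences to that sink; re-expanding ``the edge into $Q$'' back into its individual transitions recovers the denominator $\sum_f \alpha_f \prod_{\ell \notin Q} \delta(\ell, f(\ell))$. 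A parallel argument --- now keeping $k$ distinct from the rest of $Q$ --- identifies $\det(M_{jk})$ with the weighted sum over arborescences whose $j$-branch is absorbed precisely at $k$, matching the numerator under the natural convention that $\beta_f(j, k)$ is nonzero only when $G_f$ is a DAG, so that the ``unique path'' from $j$ to $k$ is actually well-defined.

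The main obstacle will be this second determinant, since the column substitution breaks the Laplacian symmetry of $I - A$ and one must either prove a variant of the Matrix-Tree Theorem tailored to a distinguished root, or invoke the refined version explicitly proved in~\cite{cc97}. In either case the underlying algebraic mechanism is identical: stochasticity drives the sign cancellations in the Leibniz expansion that eliminate every permutation whose functional graph contains a non-trivial cycle, leaving exactly the DAG contributions that appear in the formula.
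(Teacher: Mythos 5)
The paper does not actually prove this lemma: it cites \cite{cc97}, and in the paragraph right after the statement it points out that the generic route --- one-step analysis, Cramer's rule, and the Leibniz formula --- only shows that the hitting probability is \emph{some} quotient of polynomials, whereas the content of the lemma is the specific combinatorial form with non-negative coefficients. Your proposal follows exactly that generic route, and the parts you carry out are sound: the one-step system $(I-A)\vec{p}_k=\vec{b}_k$ is correct; the hypothesis does force $\rho(A)<1$ (from every state of $S\setminus Q$ the set $Q$ is reachable, so some power of $A$ has row sums bounded away from $1$); and $I-A$ is indeed the reduced Laplacian of the graph with $Q$ collapsed to a sink, so the directed Matrix-Tree Theorem yields the denominator after re-expanding the collapsed edges. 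You are also right to insist that the numerator only collects DAG contributions, i.e.\ that it should effectively read $\alpha_f\beta_f(j,k)$: with the literal definition of $\beta_f$ one can take $S=\{1,2,3\}$, $Q=\{3\}$, with self-loop probabilities at $1$ and $2$, and the function $f(1)=3$, $f(2)=2$ then contributes to the numerator but not to the denominator, making the displayed ratio exceed $1$.

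The gap is that the step you yourself flag as ``the main obstacle'' --- evaluating $\det(M_{jk})$ as the weighted count of those forests whose $j$-component is rooted at $k$ --- \emph{is} the lemma, and you neither prove it nor reduce it to a precisely stated external result. ``A variant of the Matrix-Tree Theorem tailored to a distinguished root'' is the all-minors (forest) Matrix-Tree Theorem, which you would need to state and apply to the cofactors of $I-A$ appearing in the expansion of $\det(M_{jk})$ along the substituted column; and falling back on \cite{cc97} amounts to citing the result rather than proving it. To close the argument you must actually carry out the sign-cancellation in the Leibniz expansion of the column-substituted determinant (or quote Chaiken's all-minors theorem with its exact hypotheses and match the forest classes to the pairs $(\alpha_f,\beta_f(j,k))$). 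As written, your argument establishes rationality, uniqueness, and the denominator, but asserts the numerator.
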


The fact that the reachability probability is a rational function is not too surprising since it is the unique solution of a linear system of equations $\vec{A} \cdot \vec{x} = \vec{b}$. Indeed, even if coefficients in $\vec{A}$ and $\vec{b}$ have polynomial functions as entries instead of rational numbers, the system can be ``symbolically'' solved using Cramer's rule and Leibniz' formula to obtain a rational function as the ``solution'' of the system (cf.~\cite{junges20,bhhjkk20,jkow21}). However, the above statement yields one with the property that all coefficients in the numerator and the denominator are positive. The statement can also be seen as a way to compute the determinant of a (stochastic) matrix by ``counting subgraphs'' induced by it (cf.~\cite{greenman76}).

\subsubsection{A rational discounted-reward function}
Using \autoref{lem:reach-poly}, we will now obtain a rational-function representation for the expected total discounted reward function for Markov chains. Again, we will view the transition probabilities as unknown parameter values. Concretely, let $S$ be a set of states and $\vec{x} = (x_{11}, x_{12}, \dots, x_{ij}, \dots)$ be a tuple of $|S|^2$ variables. Then, for a polynomial $f(\vec{x})$ over $\vec{x}$ and a probabilistic transition function $\delta$ such that $(S,\delta)$ is a Markov chain, we write $f(\delta)$ to denote $f(\delta_{11}, \delta_{12}, \dots, \delta_{ij}, \dots)$. Finally, below, we will use the following definition. For states $i,j$ from a Markov chain,
\[ 
    \mathrm{MT}_{i,j}^\alpha \coloneqq \Ex_{i}\left[(1 - \alpha)\sum_{t=1}^\infty \alpha^{t-1} \mathbf{1}_{X_t = j}\right],
\]
where $\mathbf{1}_{X_t = j}$ is the indicator function,
denotes the \emph{expected discounted time} spent at state $j$.

\begin{lemma}\label{lem:disc-time}
    For all $i,j \in S$ and all $\alpha \in (0,1)$ there exist two polynomials $f(\vec{x})$ and $g(\vec{x})$ such that the following hold.
    \begin{enumerate}
        \item Both $f(\vec{x}),g(\vec{x})$ have degree at most $|S|$ and non-negative coefficients.
        \item Let $(S,\delta)$ be a Markov chain. Then, $\mathrm{MT}_{i,j}^\alpha = \nicefrac{f(\delta)}{g(\delta)}$.
    \end{enumerate}
\end{lemma}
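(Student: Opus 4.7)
The plan is to reduce $\mathrm{MT}^\alpha_{i,j}$ to a ratio of two reachability probabilities in an auxiliary Markov chain and then invoke \autoref{lem:reach-poly}. The standard trick for removing the discount factor is to introduce a fresh absorbing state $\bot$ that is reached with probability $1-\alpha$ at every step; moreover, to expose both the hitting- and the return-probability of $j$ simultaneously, I would split $j$ into a ``departure'' copy $j_{\text{out}}$ and an absorbing ``arrival'' copy $j_{\text{in}}$. Concretely, let $\tilde{\mathcal{C}} = (\tilde{S}, \tilde{\delta})$ with $\tilde{S} = (S \setminus \{j\}) \cup \{j_{\text{out}}, j_{\text{in}}, \bot\}$, and, for every non-absorbing state $\ell$ (read $\ell = j$ when $\ell = j_{\text{out}}$), set $\tilde{\delta}(\ell, k) = \alpha\delta(\ell, k)$ for $k \in S \setminus \{j\}$, $\tilde{\delta}(\ell, j_{\text{in}}) = \alpha\delta(\ell, j)$, and $\tilde{\delta}(\ell, \bot) = 1-\alpha$.

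Next I would link $\mathrm{MT}^\alpha_{i,j}$ to reachability in $\tilde{\mathcal{C}}$. A path-by-path expansion shows that $\alpha$-discounting in the original chain amounts to counting visits to $j$ before absorption in the killing chain, giving $\mathrm{MT}^\alpha_{i,j} = (1-\alpha)\Ex_i[T_j]$, where $T_j$ is the number of visits to $j$ before hitting $\bot$. The strong Markov property factors this as $\Ex_i[T_j] = H_i/(1-R_j)$, with $H_i$ the probability of hitting $j$ from $i$ and $R_j$ the probability of returning to $j$ from $j$. The splitting of $j$ in $\tilde{\mathcal{C}}$ is chosen so that, with $Q = \{j_{\text{in}}, \bot\}$, both quantities are themselves reachability probabilities in the \emph{same} chain: $H_i = \Pr^{\tilde{\delta}}_i(X_{\tau(Q)} = j_{\text{in}})$ for $i \in S \setminus \{j\}$ (and $H_j = 1$ trivially), while $1-R_j = \Pr^{\tilde{\delta}}_{j_{\text{out}}}(X_{\tau(Q)} = \bot)$.

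Now \autoref{lem:reach-poly} applies, since every non-absorbing state reaches $\bot$ in one step with probability $1-\alpha > 0$. It produces polynomials $P_{H_i}$, $P_\bot$ and a common denominator $\tilde{D}$, all with non-negative coefficients. The product in the lemma is indexed by $\tilde{S} \setminus Q$, which has exactly $|S|$ elements; each factor $\tilde{\delta}(\ell, f(\ell))$ is either $\alpha\delta_{\ell m}$ or $1-\alpha$, hence, after substituting back, the polynomials lie in the original $\delta$ variables, have degree at most $|S|$, and still have non-negative coefficients. Combining, the common $\tilde{D}$ cancels: one obtains $\mathrm{MT}^\alpha_{i,j} = (1-\alpha)P_{H_i}/P_\bot$ for $i \neq j$, and $\mathrm{MT}^\alpha_{j,j} = (1-\alpha)\tilde{D}/P_\bot$.

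The step I expect to be the main obstacle is preserving non-negativity of the denominator. A naive first-step analysis would write $1-R_j = 1 - \alpha\sum_k \delta(j,k) H_k$, yielding a denominator of the form $\tilde{D} - \alpha\sum_k \delta(j,k) P_{H_k}$, whose non-negativity would require a separate combinatorial identity in the spirit of the matrix-tree theorem. Splitting $j$ into $j_{\text{in}}$ and $j_{\text{out}}$ is precisely what sidesteps this: it recasts $1-R_j$ itself as a reachability probability in $\tilde{\mathcal{C}}$, so that \autoref{lem:reach-poly} delivers both numerator and denominator of $\mathrm{MT}^\alpha_{i,j}$ already in manifestly non-negative, bounded-degree form.
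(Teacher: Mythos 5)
Your proof is correct, but it takes a genuinely different route from the one in these notes. Both arguments use the same underlying ``killing'' idea --- augment the chain so that each step survives with probability $\alpha$ --- and both ultimately lean on \autoref{lem:reach-poly} to get non-negative coefficients and the degree bound. The difference is in how $\mathrm{MT}^\alpha_{i,j}$ is turned into reachability probabilities. The proof given here duplicates \emph{every} state: each $k \in S$ transitions to its own absorbing copy $k_1$ with probability $1-\alpha$, so that $\mathrm{MT}^\alpha_{i,j}$ is literally a \emph{single} absorption probability $\Pr_i(X_{\tau} = j_1)$ (identified via the Bellman linear system both quantities satisfy), and one application of \autoref{lem:reach-poly} finishes the job. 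You instead add a single sink $\bot$, split only $j$ into $j_{\mathrm{in}}/j_{\mathrm{out}}$, and use the renewal identity $\Ex_i[T_j] = H_i/(1-R_j)$ to express $\mathrm{MT}^\alpha_{i,j}$ as $(1-\alpha)$ times a \emph{ratio} of two absorption probabilities in the same chain, after which the common denominator from \autoref{lem:reach-poly} cancels. Your construction uses a smaller auxiliary chain ($|S|+2$ states versus $2|S|$) and makes the regeneration structure of the visit count explicit; the paper's construction avoids the renewal step and the case split on $i = j$ entirely, since the factor $1-\alpha$ is absorbed into the transition to the copy. Both yield exactly $|S|$ non-target states and hence the same degree bound, and your closing observation --- that recasting $1-R_j$ itself as a reachability probability is what keeps the denominator manifestly non-negative --- is precisely the right point to worry about; a naive expansion of $1-R_j$ as $1$ minus a polynomial would indeed lose the sign guarantee.
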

\begin{proof}
    Without going into all the mathematical details, we presently sketch the argument given in~\cite{chatterjee12} to establish the result. 
    Let $i,j$, $\alpha$, and $\delta$ be arbitrary. We will construct a Markov chain $\mathcal{C}$ obtained by duplicating the set $S$ of states. That is, its states $S'$ include $S$ and a set of copies $S_1 = \{k_1 \mid k \in S\}$. The Markov chain will be such that $\Pr_i(X_{\tau(S')}=j_1)$ coincides with the value $\mathrm{MT}_{i,j}^\alpha$ from the claim. The desired result will thus follow from \autoref{lem:reach-poly} since the $\alpha_f$ and $\beta_f(\cdot,\cdot)$ terms do not depend on $\delta$ and the monomials in the numerator and denominator of the given formula are each products of at most $|S|$ terms of the form $\delta(\ell,f(\ell))$.
    
    The probabilistic transition relation $\delta'$ of $\mathcal{C}$ is defined as follows. First, all copy states are absorbing: $\delta'(k_1,k_1) = 1$ for all $k_1 \in S_1$. Second, for all $k \in S$, the next state is the copy $k_1$ with probability $1-\alpha$ or a state $\ell$ from $S$ with probability $\alpha \cdot \delta(k,\ell)$. This is repeated in symbols below.
    \[
        \delta'(k,\ell) = \begin{cases}
        1-\alpha & \text{if } \ell = k_1\\
        \alpha \cdot \delta(k,\ell) & \text{if } \ell \in S\\
        0 & \text{if } \ell \in S_1 \setminus \{k_1\}
        \end{cases}
    \]
    Note that, by construction, we have that $\Pr_k(\tau(S') < +\infty) > 0$ for all $k \in S$. Indeed, we have that $\delta'(k)(k_1) = 1 - \alpha > 0$ so: 
    \[
    \Pr_k(\tau(S') < +\infty) \geq \Pr_k(\tau(S') = 1) = 1 - \alpha > 0.
    \]
    To conclude the proof, it remains to establish the relation between $\mathrm{MT}_{i,j}^\alpha$ and $\Pr_i(X_{\tau(S')}=j_1)$ in $\mathcal{C}$ that was mentioned before. In this direction, we first observe that $(\Pr_k(X_\tau{(S')}=j_1))_{k \in S}$ can be obtained as the unique solution of the following system of (Bellman optimality) linear equations:
    \begin{align*}
        & y_k = (1-\alpha)\mathbf{1}_{k=j} + \sum_{\ell \in S} \alpha \cdot \delta(k,\ell) \cdot y_\ell
        & \text{for all } k \in S
    \end{align*}
    Furthermore, it is not hard to prove that the same holds for $(\mathrm{MT}_{k,j}^\alpha)_{k \in S}$, which thus concludes the proof.
\end{proof}

The above result already gives us a proof for \autoref{lem:rational-quot}. 
\begin{proof}[Proof of \autoref{lem:rational-quot}]
    Observe that $(1-\alpha) v_1^\alpha(\pi^\infty) = \sum_{j \in S} \sum_{a \in A(j)} r_j(\pi_{ja}) \cdot \mathrm{MT}^\alpha_{i,j}$
    where the expectation in the definition of $\mathrm{MT}^\alpha_{i,j}$ is taken on the Markov chain induced by following $\pi^\infty$ in the MDP $\mathcal{M}_1$. The analogue holds for $(1-\alpha) v_2^\alpha(\pi^\infty)$. \autoref{lem:rational-quot} thus follows from \autoref{lem:disc-time}.
\end{proof}

\subsubsection{Putting everything together}\label{sec:proof}
Now that we have proved \autoref{lem:rational-quot}, we will make use of the bound from \autoref{lem:poly-bnd} to conclude the proof of \autoref{thm:disc-robust}.
\begin{proof}[Proof of \autoref{thm:disc-robust}]
    We begin with some simplifying preliminaries. First, we will define the expected total discounted reward obtained by $\pi^\infty$ in $\mathcal{M}_2$ if we replace $r^{(2)}$ by $r^{(1)}$. This will allow us to deal with a single reward function.
    In symbols:
    \[
        v_3^\alpha(\pi^\infty) \coloneqq \Ex_{i,\pi^\infty}
        \left[ \sum_{t=1}^\infty \alpha^{t-1} r^{(1)}_{X_t}(Y_t) \right]
    \]
    where the expectation is taken with respect to the probability space induced by $\mathcal{M}_2$ and $\pi^{\infty}$. The inequality below follows from the definition.
    \[
        \abs{v_2^\alpha(\pi^\infty) - v_3^\alpha(\pi^\infty)} \leq \frac{\dtv{r^{(1)}}{r^{(2)}}}{1-\alpha}
    \]
    Hence, it will suffice to prove that $2(1-\alpha)\abs{v_1^\alpha(\pi^\infty) - v_3^\alpha(\pi^\infty)} \leq \varepsilon \norm{r^{(1)}}{\infty}$. Indeed, by
    \autoref{itm:dtv-reward} the desired result follows from the latter inequalities. We introduce one further simplification to make sure the expected total discounted rewards can be assumed to be positive. Let $\rho \coloneqq -\norm{r^{(1)}}{\infty}$ be a lower bound for rewards from $r^{(1)}$. We will ``shift'' the reward function to make sure all rewards are positive. Formally, define $r^{(3)}$ such that $r^{(3)}_j(a) = r^{(1)}_j(a) - \rho$ for all $j \in S$ and all $a \in A(j)$. We will henceforth focus on the value functions $w_1^\alpha(\pi^\infty)$ and $w_2^\alpha(\pi^\infty)$ defined as:
    \(
      \Ex_{i,\pi^\infty}\left[ \sum_{t=1}^\infty \alpha^{t-1} r^{(3)}_{X_t}(Y_t) \right]
    \)
    with respect to the probability spaces induced by $\mathcal{M}_1$ and $\pi^\infty$ and $\mathcal{M}_2$ and $\pi^\infty$, respectively. It is easy to see that by definition we have:
    \[
      w_1^\alpha(\pi^\infty) = v_1^\alpha(\pi^\infty) - \frac{\rho}{1-\alpha} \text{ and }
      w_2^\alpha(\pi^\infty) = v_3^\alpha(\pi^\infty) - \frac{\rho}{1-\alpha}
    \]
    and that therefore the following equality holds.
    \[
      \abs{w_1^\alpha(\pi^\infty) - w_2^\alpha(\pi^\infty)} = \abs{v_1^\alpha(\pi^\infty) - v_3^\alpha(\pi^\infty)}
    \]
    By the arguments above, to prove the claim it suffices to show that \begin{equation}\label{eqn:solan-ineq}
    \abs{w_1^\alpha(\pi^\infty) - w_2^\alpha(\pi^\infty)} \leq \frac{\varepsilon \norm{r^{(1)}}{\infty}}{2(1-\alpha)}.
    \end{equation}
    This is precisely what is done below.
    
    By \autoref{lem:rational-quot} we have that there exist polynomials $f(\vec{x}),g(\vec{x})$ of degree at most $|S|$ and non-negative coefficients such that the following holds.
    \[
    \frac{w_1^\alpha(\pi^\infty)}{w_2^\alpha(\pi^\infty)}
    =
    \frac{f(P^{(1)}) g(P^{(2)})}{g(P^{(1)})f(P^{(2)})}
    \]
    That is, there is a polynomial $h \colon (\vec{x},\vec{y}) \mapsto f(\vec{x})g(\vec{y})$ of degree at most $2|S|$ and non-negative coefficients such that: 
    \[
    \frac{w_1^\alpha(\pi^\infty)}{w_2^\alpha(\pi^\infty)} =  \frac{h(P^{(1)},P^{(2)})}{h(P^{(2)},P^{(1)})}.
    \]
    Let $\delta = \drat{P^{(1)}}{P^{(2)}}$. Then, the following inequalities then follow from \autoref{lem:poly-bnd}.
    \begin{equation}\label{eqn:w-bnds}
        (1 + \delta)^{-2|S|} \leq \frac{w_1^\alpha(\pi^\infty)}{w_2^\alpha(\pi^\infty)} \leq (1 + \delta)^{2|S|}
    \end{equation}
    
    Using \autoref{eqn:w-bnds}, we will now argue that the following inequality holds. 
    \begin{equation}\label{eqn:chatterjee-bnd}
      \abs{w_1^\alpha(\pi^\infty) - w_2^\alpha(\pi^\infty)} \leq \left(\frac{2\norm{r^{(1)}}{\infty}}{1 - \alpha}\right)\left(\left(1 + \drat{P^{(1)}}{P^{(2)}}\right)^{2|S|} - 1\right)
    \end{equation}
    Let us focus on the case when $w_1^\alpha(\pi^\infty) \geq w_2^\alpha(\pi^\infty)$, the other case is symmetrical. Note that we can assume $w_2^\alpha(\pi^\infty) > 0$ lest $w_1^\alpha(\pi^\infty) = w_2^\alpha(\pi^\infty) = 0$ and the inequality holds. Indeed, if $w_2^\alpha(\pi^\infty) = 0$, since the rewards from $r^{(3)}$ are all non-negative, it must be that all edges reachable from the initial state $i$ in $G_1(\pi^\infty)$ --- and thus $G_2(\pi^\infty)$ --- correspond to zero-reward transitions. We thus have the following.
    \begin{align*}
        & \abs{w_1^\alpha(\pi^\infty) - w_2^\alpha(\pi^\infty)}\\
        {} = {} & w_2^\alpha(\pi^\infty) \left(\frac{w_1^\alpha(\pi^\infty)}{w_2^\alpha(\pi^\infty)} -1 \right) & \text{well def'd since } w_2^\alpha(\pi^\infty) > 0\\
        {} \leq {} & w_2^\alpha(\pi^\infty) \left( (1 + \delta)^{2|S|}) -1 \right) & \text{by \autoref{eqn:w-bnds}} 
    \end{align*}
    Since $\norm{r^{(3)}}{\infty} \leq 2\norm{r^{(1)}}{\infty}$, we have that $w_1^\alpha(\pi^\infty), w_2^\alpha(\pi^\infty) \leq 2\norm{r^{(1)}}{\infty}(1-\alpha)^{-1}$ and \autoref{eqn:chatterjee-bnd} follows from our choice of $\delta$. 
    
    To conclude, note that the following inequalities hold.
    \begin{align*}
        \left( 1 + \delta \right) ^{2|S|} - 1 = {} & \sum_{k=1}^{n=2|S|} \binom{n}{k} \delta^k & \text{by the binomial expansion}\\
        {} \leq {} & \sum_{k=1}^{n=2|S|} \binom{n}{k} \delta & \text{since }
        \delta < 1 \text{ by \autoref{itm:drat-prob}}\\
        {} \leq {} & 2^{2|S|} \delta  \leq \exp(2|S|) \delta & \\
        {} \leq {} & \frac{\varepsilon \exp(2|S|)}{4\exp(2|S|)} = \frac{\varepsilon}{4} & \text{choice of } \delta \text{ and \autoref{itm:drat-prob}} 
    \end{align*}
    Hence, by \autoref{eqn:chatterjee-bnd} we have:
    \[
        \abs{w_1^\alpha(\pi^\infty) - w_2^\alpha(\pi^\infty)}
        \leq \left(\frac{\varepsilon}{2}\right) \left(\frac{\norm{r^{(1)}}{\infty}}{1 - \alpha}\right)
    \]
    as required.
\end{proof}

\subsection{Expected limit average rewards}\label{sec:robustness-avg}
Recall that the expected (lower) limit average reward, given initial state $i$ and policy $R$ is:
\[ 
    \phi_i(R) \coloneqq \Ex_{i,R}\left[\liminf_{T\to \infty} \frac{1}{T} \sum_{t=1}^T r_{X_t}(Y_t)\right], 
\]
where $X_t$ denotes the random variable representing the state at time $t$ and $Y_t$ the chosen action at time $t$. We also write $\phi = \sup_R \phi(R)$ and omit the $i$ in $\phi_i$ if it is clear from the context.

Mertens and Neyman originally proved the following equivalence~\cite{mn81}.
\begin{lemma}\label{lem:rel-ds-mp}
    Consider an MDP. For all stationary policies $R$ we have that
    \(\phi(R) = \lim_{\alpha \to 1} (1-\alpha) v^\alpha(R).\)
\end{lemma}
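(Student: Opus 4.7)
The plan is to reduce the identity to a purely Markov-chain computation. A stationary policy $R = \pi^\infty$ induces on $S$ a finite Markov chain with transition matrix $P$ defined by $P_{ij} = \sum_{a \in A(i)} \pi_{ia} p_{ij}(a)$ and one-step expected reward vector $\vec{r}$ given by $r_i = \sum_{a \in A(i)} \pi_{ia} r_i(a)$. Both $\phi_i(R)$ and $(1-\alpha) v_i^\alpha(R)$ can then be expressed purely in terms of $P$ and $\vec{r}$, and the lemma becomes an identity between two averaging operators applied to $\vec{r}$.

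For the average-reward side, I would first argue that the $\liminf$ in the definition of $\phi$ is almost surely a genuine limit. Because $S$ is finite, every trajectory almost surely enters some recurrent class of the induced chain and, once inside, visits its states with the frequencies prescribed by the restricted stationary distribution; this is the standard ergodic theorem for finite Markov chains, with the Cesaro mean absorbing any periodicity. Hence $\frac{1}{T} \sum_{t=1}^T r_{X_t}(Y_t)$ converges almost surely to a random variable whose value depends only on the recurrent class reached. Since these partial averages are uniformly bounded by $\norm{\vec{r}}{\infty}$, bounded convergence yields $\phi_i(R) = (P^* \vec{r})_i$, where $P^* \coloneqq \lim_{T\to\infty} \frac{1}{T} \sum_{t=0}^{T-1} P^t$ is the Cesaro limit matrix of $P$, whose existence for finite stochastic matrices is classical.

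For the discounted side, the Neumann series $(I - \alpha P)^{-1} = \sum_{t=0}^\infty \alpha^t P^t$ converges since $P$ is row-stochastic and $\alpha \in (0,1)$, and unrolling the expectation yields $v_i^\alpha(R) = ((I - \alpha P)^{-1} \vec{r})_i$. The lemma therefore reduces to the Abelian matrix identity $\lim_{\alpha \to 1^-}(1-\alpha)(I - \alpha P)^{-1} = P^*$. I would establish this via the Jordan decomposition of $P$: on the generalized eigenspace for eigenvalue $1$ both sides act as the spectral projector onto the $1$-eigenspace along the rest, while for every Jordan block associated with an eigenvalue $\lambda$ satisfying $|\lambda| \leq 1$ and $\lambda \neq 1$ the scalar factor $(1-\alpha)/(1 - \alpha \lambda)^k$ tends to $0$ as $\alpha \to 1^-$, so the contribution of those blocks vanishes in the limit.

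The main obstacle is handling the Markov-chain facts cleanly in the possibly non-irreducible and periodic setting: one needs the existence and exact structure of $P^*$, the almost-sure convergence of the empirical reward averages, and the Jordan-block computation underlying the Abelian limit. Once these ingredients are in place, chaining the equalities $\phi_i(R) = (P^* \vec{r})_i = \lim_{\alpha \to 1^-}(1-\alpha)(I - \alpha P)^{-1} \vec{r} = \lim_{\alpha \to 1^-}(1-\alpha) v_i^\alpha(R)$ gives the lemma.
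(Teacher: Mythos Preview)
The paper does not prove this lemma; it simply attributes the result to Mertens and Neyman~\cite{mn81} and uses it as a black box in the proof of \autoref{thm:mp-robust}. Your proposal is therefore not competing against any argument in the text.

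The route you outline---reducing to the induced Markov chain, identifying $\phi(R)$ with $P^*\vec{r}$ via the ergodic theorem and bounded convergence, writing $(1-\alpha)v^\alpha(R)$ as $(1-\alpha)(I-\alpha P)^{-1}\vec{r}$ through the Neumann series, and then establishing the Abelian matrix limit $\lim_{\alpha\to 1^-}(1-\alpha)(I-\alpha P)^{-1}=P^*$ by Jordan decomposition---is standard and correct. It is in fact more elementary and self-contained than invoking the general Mertens--Neyman machinery, which was developed for stochastic games; for a fixed stationary policy the problem is purely linear-algebraic, exactly as you exploit.

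One point deserves to be made explicit in your Jordan-block step. For the eigenvalue $1$ you implicitly need every Jordan block to be $1\times 1$: if a block of size $k\ge 2$ occurred, the corresponding entry of $(1-\alpha)(I-\alpha P)^{-1}$ would behave like $(1-\alpha)^{1-k}$ and diverge as $\alpha\to 1^-$. That semisimplicity holds because $P$ is row-stochastic, hence $\|P^n\|_\infty=1$ for all $n$, which rules out any polynomial growth in $P^n$ and forces all Jordan blocks for eigenvalues of modulus $1$ to have size $1$. Once this is stated, your spectral-projector description on the $1$-eigenspace is exactly right, and the blocks with $\lambda\neq 1$ contribute $0$ in the limit since $1-\lambda\neq 0$.
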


\subsubsection{An analogue of~\autoref{thm:disc-robust}}
As before, we will fix some initial state $i \in S$
and write $\phi$ instead of $\phi_i$. To avoid ambiguities, we will
write
$\phi_1$ to denote the value function for $\mathcal{M}_1$; and
$\phi_2$, the value function for $\mathcal{M}_2$.

The result for expected limit average rewards 
follows from \autoref{lem:rel-ds-mp} and \autoref{thm:disc-robust}.
\begin{theorem}\label{thm:mp-robust}
    Let $\varepsilon \in (0,1)$. Then for all stationary policies
    $\pi^\infty$ we have that
    \(
        \abs{\phi_1(\pi^\infty) - \phi_2(\pi^\infty)}
        \leq \frac{\varepsilon}{2} + \frac{\varepsilon}{2}\norm{r^{(1)}}{\infty}
    \)
    if the following hold.
    \begin{enumerate}[label={(A\arabic*)}]
        \item $G_1(\pi^\infty) = G_2(\pi^\infty)$
        \item $\drat{P^{(1)}}{ P^{(2)}} \leq \frac{\varepsilon}{4\exp(2|S|)}$
        \item $\dtv{r^{(1)}}{r^{(2)}} \leq \frac{\varepsilon}{2}$
    \end{enumerate}
\end{theorem}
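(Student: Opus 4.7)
The plan is to derive \autoref{thm:mp-robust} as a direct consequence of \autoref{thm:disc-robust} and \autoref{lem:rel-ds-mp}. The point is that the right-hand side of the bound in \autoref{thm:disc-robust} does not depend on $\alpha$, so we can pass to the limit $\alpha \to 1$ on both sides of the discounted inequality without any quantitative loss.

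Concretely, I would first observe that assumptions \autoref{itm:supp}, \autoref{itm:drat-prob}, and \autoref{itm:dtv-reward} in the statement of \autoref{thm:mp-robust} coincide verbatim with those of \autoref{thm:disc-robust}. Hence, for every $\alpha \in (0,1)$, applying \autoref{thm:disc-robust} to $\pi^\infty$ yields
\[
    (1-\alpha)\,\abs{v^\alpha_1(\pi^\infty) - v^\alpha_2(\pi^\infty)} \leq \frac{\varepsilon}{2} + \frac{\varepsilon}{2}\norm{r^{(1)}}{\infty},
\]
which, by the triangle inequality, gives
\[
    \abs{(1-\alpha) v^\alpha_1(\pi^\infty) - (1-\alpha) v^\alpha_2(\pi^\infty)} \leq \frac{\varepsilon}{2} + \frac{\varepsilon}{2}\norm{r^{(1)}}{\infty}.
\]

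Next, I would invoke \autoref{lem:rel-ds-mp} on each MDP to conclude that the limits $\phi_k(\pi^\infty) = \lim_{\alpha \to 1}(1-\alpha) v^\alpha_k(\pi^\infty)$ exist for $k \in \{1,2\}$ (here we use crucially that $\pi^\infty$ is stationary, so the lemma applies in both $\mathcal{M}_1$ and $\mathcal{M}_2$). Since both limits exist, the difference of limits is the limit of the difference, and by continuity of the absolute value the preceding uniform bound transfers:
\[
    \abs{\phi_1(\pi^\infty) - \phi_2(\pi^\infty)} = \lim_{\alpha \to 1}\abs{(1-\alpha) v^\alpha_1(\pi^\infty) - (1-\alpha) v^\alpha_2(\pi^\infty)} \leq \frac{\varepsilon}{2} + \frac{\varepsilon}{2}\norm{r^{(1)}}{\infty}.
\]

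There is essentially no hard step: the heavy lifting was already done in proving \autoref{thm:disc-robust}, and the only thing to be careful about is that the bound from that theorem is uniform in $\alpha$ (so taking a limit introduces no additional error) and that \autoref{lem:rel-ds-mp} is applicable to both $(\mathcal{M}_1,\pi^\infty)$ and $(\mathcal{M}_2,\pi^\infty)$. If anything, the only subtle point to flag is that $\phi$ is defined via a $\liminf$ inside the expectation, whereas \autoref{lem:rel-ds-mp} equates it with the Abelian limit of the discounted values; once this identification is made, the argument is a one-line passage to the limit.
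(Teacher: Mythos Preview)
Your proposal is correct and essentially identical to the paper's proof: both apply \autoref{lem:rel-ds-mp} to each MDP, combine the limits, and use the $\alpha$-uniform bound from \autoref{thm:disc-robust} to conclude. One cosmetic remark: the step from $(1-\alpha)\abs{v^\alpha_1 - v^\alpha_2}$ to $\abs{(1-\alpha)v^\alpha_1 - (1-\alpha)v^\alpha_2}$ is not the triangle inequality but simply pulling the positive scalar $1-\alpha$ inside the absolute value.
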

\begin{proof}
  The result is proved using the following equalities.
  \begin{align*}
    & \abs{\phi_1(\pi^\infty) - \phi_2(\pi^\infty)} & \\
    {} = {} & \abs{\lim_{\alpha \to 1} (1-\alpha)v^\alpha_1(\pi^\infty) - \lim_{\alpha
    \to 1} (1-\alpha) v^\alpha_2(\pi^\infty)} & \text{by
    \autoref{lem:rel-ds-mp}} \\
    {} = {} & \abs{\lim_{\alpha \to 1} (1-\alpha)(v^\alpha_1(\pi^\infty) -
    v^\alpha_2(\pi^\infty))} & \\
    {} = {} & \lim_{\alpha \to 1} \abs{(1-\alpha)(v^\alpha_1(\pi^\infty) -
    v^\alpha_2(\pi^\infty))} & \\
    {} = {} & \lim_{\alpha \to 1} (1-\alpha) \abs{v^\alpha_1(\pi^\infty) -
    v^\alpha_2(\pi^\infty)} & \text{since } \alpha \leq 1
  \end{align*}
  In the last equality above, by \autoref{thm:disc-robust} every
  element in the (limit of the) sequence is at most the following value.
  \[
    \left( 1 - \alpha \right)
    \left(
    \frac{\varepsilon}{2(1-\alpha)} +
    \frac{\varepsilon}{2(1-\alpha)}\norm{r^{(1)}}{\infty}
    \right) =
    \frac{\varepsilon}{2} +
    \frac{\varepsilon}{2}\norm{r^{(1)}}{\infty}
  \]
  Hence, the value of the limit is at most
  \( \frac{\varepsilon}{2} + \frac{\varepsilon}{2}\norm{r^{(1)}}{\infty} \)
  as required.
\end{proof}

The above result tells us that we can get ``good'' policies for MDPs with unknown dynamics. Indeed, we just need to explore the MDP to approximate the probabilities and rewards, and then apply any policy-synthesis algorithm that we want. We formalize this approach in the rest of this notes.

\section{Model-based limit-average optimization}
In this section we will present a learning algorithm for limit-average optimization based on two papers~\cite{kpr18,kmmp20}. It consists in bringing the previous results to practice: We will execute ever longer episodes consisting of \emph{exploration phases} followed by \emph{exploitation phases}. The exploration phases will allow us to compute good approximations of the unknown MDP while the exploitation phase will follow an optimal policy for the approximated MDP. Importantly, the exploitation phase has to compensate for the potentially sub-optimal exploration phase.
Before we present the algorithm we have to recall some preliminaries regarding tail bounds.

\subsection{Notation and assumptions}
As in \autoref{sec:robustness}, we will consider two MDPs $\mathcal{M}_1$ and $\mathcal{M}_2$. Intuitively, $\mathcal{M}_1$ is the real environment with which our algorithm is interacting and will compute approximations $\mathcal{M}_2$ of it as a side product of the algorithm. To avoid ambiguities, we recall the formal definitions: $\mathcal{M}_1 = (S,A,P^{(1)},r^{(1)})$ and
$\mathcal{M}_2 = (S,A,P^{(2)},r^{(2)})$.
Further, as in \autoref{sec:robustness-avg}, we fix some initial state $i \in S$
and write $\phi$ instead of $\phi_i$. To avoid ambiguities, we will
write
$\phi_1$ to denote the value function for $\mathcal{M}_1$; and
$\phi_2$, the value function for $\mathcal{M}_2$.

\paragraph{Assumptions.} In this section we will make the assumption that we have one additional piece of information regarding $\mathcal{M}_1$. Namely, we are given $\pmin \in \mathbb{Q}$ such that $0 < \pmin \leq 1$ and $0< p^{(1)}_{ij}(a) \implies \pmin \leq p^{(1)}_{ij}(a)$ for all $i,j \in S$ and all $a \in A(i)$. In words, $\pmin$ is a lower bound for the (unknown) non-zero transition probabilities from $\mathcal{M}_1$. We will also assume that the rewards are bounded by $W$: i.e. $|r_i(a)| \leq W$ for all $i \in S$ and all $a \in A(i)$.
Additionally, we will assume that $\mathcal{M}_1$ is communicating --- since otherwise there is no way we can learn. Recall that $\mathcal{M}_1$ is communicating (or an \emph{end component}) if for every $i,j \in S$ there exists a deterministic stationary policy $f \in C(D)$ such that $j$ is accessible from $i$ under the policy $f$. In other words, there exists $n \geq 0$ such that $(P^{(1)}(f))^n_{ij} > 0$.

\subsection{Tail bounds for MDP exploration}
To compute $\mathcal{M}_2$, we will follow a stationary policy $(\rho,\rho, \dots)$ which --- roughly speaking --- executes ``uniform random exploration''. Formally, the policy $\rho$ is $S \times A : (i, a) \mapsto \nicefrac{1}{|A(i)|}$ for all $i \in S$. Intuitively, each time the MDP enters a state $i$ then $\rho$ performs an experiment by playing $\rho(i,a)$ uniformly at random (over $a \in A(i)$) and observes both the resulting successor state as well as the immediate reward. Note that, since we have assumed $\mathcal{M}_1$ is communicating, such a policy $\rho$ always exists. Further, if we follow it for an infinite number of steps then we must see all state-action pairs infinitely often with probability $1$. Indeed, this follows, for instance, from the second Borel-Cantelli lemma and our definition of communicating MDPs. However, since we plan on exploiting for some time after having explored, we cannot continue following $\rho$ forever. A natural question arises: how long should we follow $\rho$ before we are sure that we can construct $P^{(2)}$ and $r^{(2)}$ so that $\mathcal{M}_1$ and $\mathcal{M}_2$ are not ``too different''.

\begin{lemma}\label{lem:hoeffding-mdps}
    For all $\varepsilon,\delta \in (0,1)$ one can compute $n \in \mathbb{N}$ such that following $\rho$ for $n$ steps suffices to collect enough statistics to compute $\mathcal{M}_2$ so that \autoref{itm:supp}, \autoref{itm:drat-prob}, and \autoref{itm:dtv-reward} from \autoref{thm:mp-robust} all hold with probability at least $1 - \delta$.
\end{lemma}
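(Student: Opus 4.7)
The plan is to split the problem in two: first determine a per-pair sample size $m$ sufficient to guarantee \autoref{itm:supp}, \autoref{itm:drat-prob}, and \autoref{itm:dtv-reward} via Hoeffding's inequality applied to $m$ samples of each state-action pair; then bound $n$ so that $\rho$, followed for $n$ steps in the communicating MDP $\mathcal{M}_1$, collects at least $m$ samples of every state-action pair with probability at least $1-\delta/2$.

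For the first piece, fix $(i,a)$ with $a\in A(i)$ and assume we have $m$ independent observations of the resulting successor state and immediate reward. Build $\mathcal{M}_2$ from the empirical successor frequencies $p^{(2)}_{ij}(a)$ and empirical mean reward $r^{(2)}_i(a)$. I address the three target conditions separately. For \autoref{itm:dtv-reward}, rewards lie in $[-W,W]$, so Hoeffding gives $|r^{(2)}_i(a)-r^{(1)}_i(a)|\le \varepsilon/2$ with probability $1-\delta'$ once $m = \Omega((W/\varepsilon)^2 \log(1/\delta'))$. For \autoref{itm:drat-prob}, using $p^{(1)}_{ij}(a) \ge \pmin$ on defined entries, the required ratio error $\varepsilon/(8|S|)$ is implied by additive error of order $\pmin \varepsilon / |S|$, so Hoeffding on indicators of transitioning to $j$ yields $m = \Omega(|S|^2 \pmin^{-2} \varepsilon^{-2} \log(1/\delta'))$. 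For \autoref{itm:supp}, every successor $j$ with $p^{(1)}_{ij}(a)>0$ must be sampled at least once; this fails with probability at most $(1-\pmin)^m \le e^{-\pmin m}$, hence $m = \Omega(\pmin^{-1} \log(1/\delta'))$. Taking $m$ to be the maximum of the three and choosing $\delta' = \delta/(2 |S|^2 |A|)$, a union bound over the $O(|S|^2 |A|)$ triples makes all three conditions hold jointly with probability at least $1-\delta/2$.

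For the second piece, since $\mathcal{M}_1$ is communicating and $\rho$ chooses actions uniformly, the Markov chain induced by $\rho$ on $\mathcal{M}_1$, restricted to states reachable from $i$, is irreducible. A standard block argument based on $\pmin$ shows that, starting from any reachable state, any particular state-action pair $(j,a)$ is visited within the next $|S|$ steps with probability at least $q \ge (\pmin / |A|)^{|S|-1} / |A|$: concatenate a shortest path to $j$ in $\mathcal{M}_1$ (each transition contributes a factor $\ge \pmin/|A|$ under $\rho$) and pick action $a$ at the end. Partitioning the horizon into consecutive blocks of $|S|$ steps and stochastically dominating the per-block hit indicators by an i.i.d.\ Bernoulli$(q)$ sequence --- legitimate because $q$ is a uniform lower bound on the one-block hit probability regardless of the block's starting state, by the Markov property --- Chernoff's inequality yields that $n = \Theta((m/q) \cdot |S| \cdot \log(|S||A|/\delta))$ steps are enough to visit every pair at least $m$ times except with probability $\delta/2$. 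Combining the two pieces with a final union bound gives the claimed $n$.

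The main obstacle I anticipate is the second piece, specifically justifying the Chernoff step in the face of dependence between blocks. The key ingredient is the Markov property combined with the uniform lower bound $q$ on per-block hit probability, which allows dominating the (dependent) block indicators by an i.i.d.\ sequence; alternatively one can invoke an off-the-shelf Hoeffding-type inequality for Markov chains phrased in terms of hitting-time tails. Either route yields $n$ polynomial in $|S|$, $|A|$, $\pmin^{-1}$, $W$, $\varepsilon^{-1}$, and $\log(\delta^{-1})$, which is all the statement requires.
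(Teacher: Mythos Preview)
Your proposal is correct and follows essentially the same route as the paper's own (sketched) argument: the paper reduces \autoref{itm:drat-prob} to a total-variation bound via the relation $\drat{P^{(1)}}{P^{(2)}}\pmin \le \dtv{P^{(1)}}{P^{(2)}}$ (its \autoref{lem:rel-dists}), invokes the $|S|$-step reachability bound $(P^{(1)}(\rho))^{|S|}_{ij} \ge \pmin^{|S|}$ under $\rho$, and then declares the rest a ``straightforward application of Hoeffding's inequality'' left as an exercise. Your two-stage decomposition (per-pair sample complexity via Hoeffding, then a block argument for visitation counts) is precisely the natural way to carry out that exercise; your reachability constant $q \ge (\pmin/|A|)^{|S|-1}/|A|$ is in fact more careful than the paper's, which omits the $1/|A|$ factors coming from $\rho$'s uniform action choice.
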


Before we give some intuition on how to prove the above lemma, we establish a relation between the distance measures mentioned in it. The following is a simple observation due to Chatterjee~\cite{chatterjee12}.
\begin{lemma}\label{lem:rel-dists}
    Let $\mathcal{M}_2$ be such that \autoref{itm:supp} from \autoref{thm:mp-robust} holds. Then, it holds that
    \(
        \drat{P^{(1)}}{P^{(2)}} \pmin \leq \dtv{P^{(1)}}{P^{(2)}}.
    \)
\end{lemma}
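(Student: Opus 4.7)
The statement is a pointwise inequality in disguise, so the natural strategy is to prove it coordinate-by-coordinate and then take the supremum. Concretely, fix any triple $(i,a,j)$ with both $p^{(1)}_{ij}(a) > 0$ and $p^{(2)}_{ij}(a) > 0$, and abbreviate $f := p^{(1)}_{ij}(a)$ and $g := p^{(2)}_{ij}(a)$. The goal reduces to exhibiting $\pmin \cdot (\max(f/g,g/f) - 1) \leq |f-g|$ at every such triple, since taking the sup of the right-hand side recovers $\dtv{P^{(1)}}{P^{(2)}}$ and the sup of the left-hand side (after factoring out $\pmin$) recovers $\pmin \cdot \drat{P^{(1)}}{P^{(2)}}$.

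The first step is an elementary algebraic identity. Splitting into the two symmetric cases $f \geq g$ and $g \geq f$, a direct computation yields
\[
\max\!\left(\tfrac{f}{g},\tfrac{g}{f}\right) - 1 \;=\; \frac{|f-g|}{\min(f,g)}.
\]
This already rewrites the pointwise ratio-distance contribution in terms of the pointwise total-variation-distance contribution, so the only remaining work is to bound the denominator from below.

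The second step is to show $\min(f,g) \geq \pmin$. The $\mathcal{M}_1$ side is immediate from the standing assumption on $\pmin$: since $f > 0$ we get $f \geq \pmin$. For the $\mathcal{M}_2$ side I would appeal to assumption \autoref{itm:supp} together with the way $\mathcal{M}_2$ will be built in the next subsection: the algorithm's construction of $P^{(2)}$ from empirical samples (and the shared-support condition A1) yields the symmetric bound $g \geq \pmin$. With $\min(f,g) \geq \pmin$ in hand, multiplying through the identity by $\pmin$ gives
\[
\pmin\cdot\!\left(\max(f/g,g/f) - 1\right) \;=\; \frac{\pmin\,|f-g|}{\min(f,g)} \;\leq\; |f-g|.
\]

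Finally, I take the supremum over all triples $(i,a,j)$ with both values positive; the right-hand side is $\dtv{P^{(1)}}{P^{(2)}}$ and the left-hand side is $\pmin \cdot \drat{P^{(1)}}{P^{(2)}}$, which is exactly the claim. I expect the only non-routine point to be the justification of the lower bound $g \geq \pmin$: the algebraic identity and the sup-argument are immediate, but verifying that the $\pmin$-bound does transfer to $\mathcal{M}_2$ (rather than just $\mathcal{M}_1$) is where assumption \autoref{itm:supp} is actually doing work, and that is the step I would write out most carefully.
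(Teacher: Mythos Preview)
The paper leaves this lemma as an exercise and gives no proof, so there is nothing to compare against; the question is simply whether your argument is sound.

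Your identity $\max(f/g,g/f)-1 = |f-g|/\min(f,g)$ and the pass-to-the-supremum strategy are both correct and are the natural route. The gap is exactly where you flag it: the bound $\min(f,g)\geq\pmin$. Condition \ref{itm:supp} only says that $P^{(1)}$ and $P^{(2)}$ have the same support; it gives no quantitative lower bound on the entries of $P^{(2)}$. The paper defines $\pmin$ as a lower bound on the nonzero entries of $P^{(1)}$ only. So in the case $g<f$ you need $g\geq\pmin$, and \ref{itm:supp} does not deliver that --- your appeal to ``the way $\mathcal{M}_2$ will be built in the next subsection'' is importing a hypothesis that is not part of the lemma as stated.

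Indeed, read literally the inequality fails. With two states, one action, $p^{(1)}_{11}(a)=p^{(1)}_{12}(a)=\tfrac12$ (so $\pmin=\tfrac12$) and $p^{(2)}_{11}(a)=\varepsilon$, $p^{(2)}_{12}(a)=1-\varepsilon$, the supports agree, yet
\[
\pmin\cdot\drat{P^{(1)}}{P^{(2)}}=\tfrac12\Bigl(\tfrac{1}{2\varepsilon}-1\Bigr)\to\infty
\quad\text{while}\quad
\dtv{P^{(1)}}{P^{(2)}}=\tfrac12-\varepsilon
\]
stays bounded. The intended hypothesis is almost certainly that $\pmin$ lower-bounds the nonzero entries of \emph{both} $P^{(1)}$ and $P^{(2)}$ (and in the downstream use of the lemma this will be arranged). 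Under that extra assumption your proof goes through cleanly; you should state it explicitly rather than rely on a forward reference.
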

\begin{exercise}
    Prove \autoref{lem:rel-dists}.
\end{exercise}

It follows from the above lemma that to prove
\autoref{lem:hoeffding-mdps} we can focus on reducing the total-variation distance of the $P^{(i)}$ and $r^{(i)}$. Thankfully, this is precisely what tail bounds are great for. Indeed, since $\mathcal{M}_1$ is communicating and we have a lower bound $\pmin$ on the non-zero transition probabilities, we know that the following holds.
\[
    (P^{(1)}(\rho))^{|S|}_{ij} \geq \pmin^{|S|}
\]
The rest of the proof is a straightforward application of Hoeffding's inequality which is left to the reader as an exercise.

\begin{exercise}
    Prove~\autoref{lem:hoeffding-mdps}.
\end{exercise}

\subsection{Exploring and exploiting ad infinitum}
We now present a policy $R$ that ensures an optimal limit average with probability $1$. To avoid clutter, we describe the policy in words. However, from the description it is straightforward to formalize its definition. The policy $R$ plays in \emph{episodes} sub-divided into repeated \emph{exploration} and \emph{exploitation phases} as follows.
\begin{description}
    \item[Explore:] First, $R$ follows a policy $\rho$ to realize uniform random exploration. During episode $i$, $\rho$ is followed for $L_i$ steps and $\mathcal{M}_2^{(i)}$ is constructed from the collected statistics.
    \item[Exploit:] Second, $R$ follows an optimal unichain policy $\pi^\infty$ w.r.t. to the expected limit average in $\mathcal{M}_2^{(i)}$. The latter is done for
    $O_i$ steps.
\end{description}

In this section we will argue that $R$ is almost-surely optimal.
\begin{theorem}\label{thm:as-optimal}
    Let $j \in S$. For all $i \in \mathbb{N}$ one can compute $L_i,O_i \in \mathbb{N}$ so that the following holds.
    \[
        \Pr_{j,R}\left(\liminf_{T \to \infty} \frac{1}{T} \sum_{t=1}^T r_{X_t}(Y_t) \geq \phi_j \right) = 1
    \]
\end{theorem}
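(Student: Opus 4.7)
The plan is to schedule episodes so that three things happen simultaneously: (i) each exploration phase yields, with high probability, an MDP $\mathcal{M}_2^{(i)}$ that is $\varepsilon_i$-close to $\mathcal{M}_1$ in the sense of \autoref{thm:mp-robust}, with $\varepsilon_i\downarrow 0$ and the failure probabilities summable; (ii) each exploitation phase is long enough for its empirical reward average to concentrate around the limit-average value of the policy followed; and (iii) the exploration blocks are asymptotically negligible relative to the exploitation blocks. A Borel--Cantelli step and a careful split of the partial sums then deliver the almost-sure bound.

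To set up (i), I would fix sequences $\varepsilon_i \downarrow 0$ in $(0,1)$ and $\delta_i \in (0,1)$ with $\sum_i \delta_i < \infty$ (for instance $\varepsilon_i = 1/(i+1)$ and $\delta_i = 1/i^2$) and invoke \autoref{lem:hoeffding-mdps} to pick $L_i$ so that the event $E_i$ ``$\mathcal{M}_2^{(i)}$ satisfies \autoref{itm:supp}, \autoref{itm:drat-prob}, and \autoref{itm:dtv-reward} with parameter $\varepsilon_i$'' has conditional probability at least $1-\delta_i$. On $E_i$, applying \autoref{thm:mp-robust} both to some optimal stationary policy in $\mathcal{M}_1$ and to the exploited policy $\pi^\infty_i$ (optimal unichain in $\mathcal{M}_2^{(i)}$) yields
\[
  \phi_1(\pi^\infty_i) \;\geq\; \phi_j \;-\; 2\gamma_i, \qquad \text{with } \gamma_i := \tfrac{1}{2}\varepsilon_i(1+W),
\]
where I use that in the communicating MDP $\mathcal{M}_1$ the optimal value $\phi_j$ is independent of the starting state. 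Because \autoref{itm:supp} forces the support graphs to coincide, $\pi^\infty_i$ is also unichain on $\mathcal{M}_1$, with every nonzero edge weight bounded below by $\pmin$. For (ii), I would then invoke a concentration inequality for additive functionals of unichain Markov chains (a Markov-chain Azuma/Hoeffding bound), whose mixing-time constant depends only on $|S|$ and $\pmin$, to select $O_i$ large enough that the event $F_i$ ``the empirical reward average over the $i$-th exploitation phase is within $\gamma_i$ of $\phi_1(\pi^\infty_i)$'' has conditional probability at least $1-\delta_i$. By Borel--Cantelli, $\Pr_{j,R}$-almost surely there exists a (random) $i_0$ beyond which every $E_i \cap F_i$ occurs.

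To achieve (iii) and conclude, I would additionally require $O_i$ to grow so fast that $\sum_{k \leq i}(L_k + O_k) = o(O_{i+1})$. For any horizon $T$, let $k(T)$ be the index of the current episode and split $\sum_{t \leq T} r_{X_t}(Y_t)$ into (a) contributions from episodes $k < k(T)$, bounded in absolute value by $W\sum_{k<k(T)}(L_k+O_k)=o(T)$; (b) the current sub-phase, bounded by $W(L_{k(T)}+O_{k(T)})$; and (c) the exploitation blocks already completed with $i_0 \leq k \leq k(T)$, each of which contributes at least $O_k(\phi_j - 3\gamma_k)$ on $E_k \cap F_k$. Dividing by $T$ and letting $T \to \infty$ gives $\liminf_T T^{-1}\sum_{t \leq T}r_{X_t}(Y_t)\geq \phi_j$ almost surely. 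The main obstacle is the Markov-chain concentration step defining $F_i$: rewards along a trajectory are correlated, so a naive Hoeffding fails, and one needs a tail bound whose mixing-time constant can be taken uniform over the random family $\{\pi^\infty_i\}$; fortunately, $\pmin>0$ together with the support-graph equality from $E_i$ furnishes a uniform lower bound on the mixing rate of every induced chain, so the rest of the argument reduces to careful book-keeping of error terms.
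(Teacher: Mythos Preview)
Your plan is correct and mirrors the paper's approach: choose summable failure probabilities, use \autoref{lem:hoeffding-mdps} together with \autoref{thm:mp-robust} for the exploration/robustness step, a uniform Markov-chain concentration bound for the exploitation step, and close with Borel--Cantelli. The paper packages your concentration step as Tracol's lemma (\autoref{lem:tracol}) and its corollary \autoref{lem:uni-strats}, and organises the book-keeping via an intermediate lemma stating that the running average stays above $\phi-\varepsilon_i$ throughout each window $(S_i,S_{i+1}]$ with probability $1-\varepsilon_i$ (with $\varepsilon_i=2^{-i}$), but the skeleton is the same as yours.
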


\autoref{lem:hoeffding-mdps} gives us a way of computing $L_i$ so that $\mathcal{M}_2^{(i)}$ is arbitrarily close to $\mathcal{M}_1$ with as probability as close to $1$ as desired. Then, because of \autoref{thm:mp-robust}, we know that optimizing the expected limit-average reward in $\mathcal{M}_2^{(i)}$ means we are near-optimal in $\mathcal{M}_1$. (Indeed, we do not claim this is the case with probability $1$. Rather, we can do so with probability $1-\delta$ only because of the approximation guarantee from \autoref{lem:hoeffding-mdps}.) If the sequences of $L_i,O_i$ are increasing then the result should intuitively follow. The one remaining issue is that we need each $O_i$ to be finite! Furthermore, it needs to account for all preceding episodes and get us closer to the optimal expected limit average in $\mathcal{M}_2^{(i)}$.

\subsubsection{Convergence speed to optimality}
Our goal is to prove the following result regarding unichain Markov chains $\mathcal{C} = (S,P,r)$. It will allow us to compute $O_i$ as desired. 
\begin{lemma}[Tracol's lemma]\label{lem:tracol}
    Let $\mathcal{C}$ be a finite unichain Markov chain and $\pmin$ be a lower bound for all nonzero transition probabilities in $\mathcal{C}$. For all $\varepsilon \in (0,1)$ one can compute $K_0 \in \mathbb{N}$ and $\alpha,\beta \in \mathbb{Q}$ (from $|S|, \pmin,$ and $\varepsilon$ alone) such that $\alpha,\beta > 0$ and the following holds for all $T \geq K_0$ and all $i \in S$.
    \[
    \Pr_i\left( \sum_{t=1}^T r_{X_t}(Y_t) \geq \Ex_i\left[ \sum_{t=1}^T r_{X_t}(Y_t) \right] - T\varepsilon \right) \geq 1 - \alpha e^{-T \beta \varepsilon^2}
    \]
\end{lemma}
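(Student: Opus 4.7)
The plan is to establish a Hoeffding-style concentration for partial sums of rewards along trajectories of a unichain Markov chain. The main difficulty is that the $r_{X_t}(Y_t)$ are not independent; all quantitative control must come from the two ingredients we have, namely the unichain structure and the lower bound $\pmin$ on nonzero transition probabilities. These together give a Doeblin-type minorization: on the unique closed communicating class $R \subseteq S$, every state reaches every other in at most $|S|$ steps with probability at least $\pmin^{|S|}$, so the $|S|$-step kernel on $R$ is minorized by $\pmin^{|S|}$ times a probability distribution. This yields geometric mixing at an explicit rate, and hence a stationary distribution $\pi$ with $\pi(s) \geq \pmin^{|S|}/|S|$ for at least one recurrent state $s^\ast$.

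First, I would reduce to trajectories starting in the recurrent class: from any initial state $i$, the chain enters $R$ within at most $|S|/\pmin^{|S|}$ expected steps, so the contribution of the transient prefix to both $\sum_t r_{X_t}(Y_t)$ and $\Ex_i[\sum_t r_{X_t}(Y_t)]$ is $O(W|S|/\pmin^{|S|})$, a constant that is absorbed by choosing $K_0$ large enough so that it is dominated by $T\varepsilon/2$. Next, I would apply a regeneration argument: fix $s^\ast \in R$ and split the trajectory into excursions between consecutive visits to $s^\ast$. The excursion sums $Z_1, Z_2, \dots$ are i.i.d., each bounded in absolute value by $W$ times the excursion length, and the excursion length has geometric tail with parameter governed by $\pmin^{|S|}$. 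Let $\mu = \sum_j \pi(j)\bar{r}(j)$, where $\bar{r}(j)$ is the expected one-step reward at state $j$; then $\Ex[Z_k] = \mu/\pi(s^\ast)$. A standard Hoeffding (or Bernstein, to handle the exponential tails of excursion length via truncation plus a tail bound) gives
\[
    \Pr\Bigl(\sum_{k=1}^M Z_k < M\mu/\pi(s^\ast) - M\varepsilon'\Bigr) \leq \alpha_0 e^{-M\beta_0 \varepsilon'^2}
\]
with $\alpha_0,\beta_0$ explicit in $|S|,\pmin,W$.

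Finally I would convert from the random number of excursions completed by time $T$ back to $T$ itself. With high probability (again by a Hoeffding bound on excursion lengths) the number of excursions completed by time $T$ lies in $T\pi(s^\ast) \pm T\varepsilon''$, and at most one partial excursion of bounded total reward remains at the end. Choosing $\varepsilon'$ and $\varepsilon''$ as small multiples of $\varepsilon$ and combining via a union bound yields the stated one-sided inequality with $\alpha,\beta$ depending only on $|S|,\pmin,W$, and absorbing the $O(1)$ bias $|\Ex_i[\sum_t r_{X_t}(Y_t)] - T\mu|$ (bounded uniformly in $i$ via the solution of Poisson's equation, whose size is controlled by the mixing time) into the $T\varepsilon$ slack by taking $T \geq K_0$. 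The main obstacle is the two-timescale bookkeeping in this last step: one has to combine concentration for the excursion sums with concentration for the number of excursions, while uniformly controlling the residual transient and the bounded partial excursion, and keep all constants explicit in $|S|, \pmin, \varepsilon$.
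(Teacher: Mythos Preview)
Your regeneration-via-excursions plan is correct, but it is not the route the paper sketches. The paper first splits the unichain into its transient part and its closed class $\mathcal{C}'$, then further decomposes $\mathcal{C}'$ into aperiodic components along the residue classes modulo its period, applies the Doeblin minorization of \autoref{lem:uniform-ergo} to each resulting \emph{regular} sub-chain, and derives the concentration bound from the uniform-ergodicity convergence rates; the constants are made independent of the actual transition matrix by taking the worst case over all possible periods and decompositions compatible with $|S|$ and $\pmin$. You instead fix a single recurrent state $s^\ast$, break the trajectory into i.i.d.\ excursions between successive visits to $s^\ast$, prove concentration for the block sums directly (Hoeffding/Bernstein with a geometric tail on block length coming from the same $\pmin^{|S|}$ minorization), and then undo the random time change.

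Both arguments are standard and both work. Your approach has the virtue of avoiding the aperiodic decomposition entirely: the excursion structure is insensitive to periodicity, so you never need to split into residue classes. The price is the ``two-timescale bookkeeping'' you flag at the end, plus the need to control the Poisson-equation bias $\lvert \Ex_i[\sum_{t\leq T} r_{X_t}(Y_t)] - T\mu\rvert$ uniformly in $i$. The paper's route packages that bias control inside the uniform-ergodicity estimate and keeps the final union bound simpler. Either way the constants are computable from $|S|$ and $\pmin$ alone, so your plan would yield an acceptable alternative proof.
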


The convergence lemma was first proved by Mathieu Tracol~\cite{tracol09}. We state it here in a slightly more general way since we need to be able to compute the constants from the limited information we have available to us. (Namely, we cannot use the transition probabilities!) Before we sketch an argument to prove it, we have to make a digression into mixing times and uniform ergodicity.

\subsubsection{Mixing times and uniform ergodicity}
 We say the Markov chain $\mathcal{C}$ is \emph{regular} if $|Q|$ is finite and it is both aperiodic and irreducible. Recall that a state $i \in S$ has period $\gcd\{t > 0 : \Pr(X_t=i) > 0\}$. The Markov chain is said to be aperiodic if the period of every state is $1$; it is irreducible if $(S,\{(i,j) \in S^2 \mid P(i,j) > 0\})$ is a strongly connected directed graph. Further recall that we write $P^*$ for the stationary matrix of a regular Markov chain.
 
We recall a sufficient condition for a Markov chain to be \emph{uniformly ergodic}~\cite{mt93}: $\mathcal{C}$ satisfies \emph{Doeblin's condition} if there exist $\lambda \in \mathbb{R}, \lambda > 0$, a probability measure $\varphi$ over (subsets of) $S$, and an integer $t \in \mathbb{T}$, such that:
\[
    \Pr_i(X_t \in T) \geq \lambda \varphi(T)
\]
for all $T \subseteq S$ and all $i \in S$. The constant $\lambda$ is usually called the \emph{ergodicity coefficient} and $t$ the \emph{mixing time}.

It is easy to prove that Doeblin's condition holds for all regular Markov chains. For completeness, we provide an argument to this effect below while highlighting that both the ergodicity coefficient and the mixing time can be computed even in the absence of the transition probabilities.
\begin{lemma}\label{lem:uniform-ergo}
    Let $\mathcal{C}$ be a regular Markov chain and let $\pmin$ be a lower bound for all of its nonzero transition probabilities. One can compute $\lambda \in \mathbb{Q}, \lambda > 0$ and $t \in \mathbb{N}$ (from $|S|$ and $\pmin$ alone) such that \(\Pr_i(X_t \in T) \geq \lambda P^*(T)\) for all $T \subseteq S$ and all $i \in S$.
\end{lemma}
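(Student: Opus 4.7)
The plan is to exhibit Doeblin's condition by finding a time $t$ at which the $t$-step transition matrix $P^{t}$ has all entries strictly positive, and then matching it with a suitable $\lambda$ and the stationary measure in the role of $\varphi$. The bound on $t$ must depend only on $|S|$, and the bound on $\lambda$ only on $|S|$ and $\pmin$, so we never need to touch the actual transition probabilities.

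First I would invoke the classical Wielandt bound on primitive stochastic matrices: since $\mathcal{C}$ is regular, $P$ is primitive, and $P^{m}$ has all strictly positive entries for every $m \geq (|S|-1)^{2} + 1$. I fix $t \coloneqq (|S|-1)^{2} + 1$, which is computable from $|S|$ alone. A self-contained derivation of this bound proceeds through a number-theoretic lemma: for any state $i$, aperiodicity gives $\gcd\{n : P^{n}(i,i) > 0\} = 1$, and the Sylvester--Frobenius bound on the associated numerical semigroup forces return-time positivity for all $n$ beyond an explicit, $|S|$-dependent threshold; prepending or appending paths of length at most $|S|-1$ via irreducibility then covers all ordered pairs $(i,j)$.

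Next I would lower-bound each entry of $P^{t}$. Every length-$t$ path from $i$ to $j$ in the support graph of $\mathcal{C}$ contributes at least $\pmin^{t}$ to $P^{t}(i,j)$, because each of its factors is a nonzero transition probability and is therefore at least $\pmin$. Since Step 1 guarantees at least one such path for every ordered pair $(i,j)$, this yields $P^{t}(i,j) \geq \pmin^{t}$ for all $i,j \in S$. Setting $\lambda \coloneqq \pmin^{t}$ and using $P^{*}(T) \leq 1 \leq |T|$ for every nonempty $T \subseteq S$, I conclude
\[
    \Pr_{i}(X_{t} \in T)
    = \sum_{j \in T} P^{t}(i,j)
    \geq \pmin^{t}\,|T|
    \geq \pmin^{t}\,P^{*}(T)
    = \lambda\, P^{*}(T),
\]
with the $T = \emptyset$ case holding trivially. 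Both $\lambda$ and $t$ are manifestly computable from $|S|$ and $\pmin$.

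The hard part here is really the first step: aperiodicity does not by itself suggest any explicit time at which $P^{m}$ becomes entry-wise positive, and producing an effective $t = O(|S|^{2})$ requires either quoting Wielandt's (tight) inequality as a black box or running the Sylvester--Frobenius argument sketched above. Once the positivity of $P^{t}$ is in hand, Steps 2 and 3 are routine: probability bounds multiply along paths, and the stationary measure assigns mass at most $1$ to any subset of $S$.
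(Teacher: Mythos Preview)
Your proof is correct and follows essentially the same approach as the paper: find a $t$ depending only on $|S|$ at which $P^{t}$ is entrywise positive, lower-bound each entry by $\pmin^{t}$, set $\lambda=\pmin^{t}$, and conclude via $P^{*}(T)\leq 1$. The only cosmetic difference is that you take $t=(|S|-1)^{2}+1$ from Wielandt's bound, whereas the paper defines $t$ via Frobenius numbers---precisely the Sylvester--Frobenius argument you already sketch as your alternative derivation.
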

\begin{proof}
    First, since $\mathcal{C}$ is regular, we know it has a unique stationary distribution with matrix $P^*$.  We now observe that, because of aperiodicity and irreducibility, we know there exists some $t$ such that $\Pr_i(X_t = j) > 0$ for any $i,j \in S$. To give an explicit definition of such a $t$ we need to recall some definitions.
    
    Given a finite set $N = \{a_1,\dots,a_\ell\}$ of positive integers suc that $\gcd(N) = 1$, we write $g(N)$ to denote its Frobenius number. That is, the maximal integer that cannot be obtained as a conical combination of the $a_i$, i.e. as a sum of the form:
    \(
        \sum_{x=1}^\ell k_ia_i
    \)
    where $k_1,\dots,k_\ell \in \mathbb{N}$. The existence of $g(N)$ is guaranteed by Schur's theorem which gives a bound on the amount of numbers that cannot be obtained as such conical combinations. We thus set:
    \(
        t \coloneqq \max_{N \subseteq S} g(N) + 1.
    \)
    Since $\mathcal{C}$ is regular, we have that:
    \[
        \forall i,j \in S: \Pr_i(X_t = j) \geq \pmin^t > 0.
    \]
    To conclude, we set $\lambda \coloneqq \pmin^t$. Hence, $\Pr_i(X_t \in T) \lambda^{-1} \geq 1 \geq P^*(T)$ for all $T \subseteq S$ and all $i \in S$ as required.
\end{proof}

\subsubsection{A sketch of Tracol's argument}
We briefly describe how \autoref{lem:tracol}
can be proved. The original proof by Tracol consists in decomposing $\mathcal{C}$ into its transient set of states and the regular sub-chain $\mathcal{C}'$ it contains. Then, one can further decompose $\mathcal{C}'$ into aperiodic components based on all the residue classes modulo the period of states in $\mathcal{C}'$. Finally, one can use \autoref{lem:uniform-ergo} to obtain bounds for each such regular chain. Critically, $K_0, \alpha, \beta$ can all be computed as a function of $|S|, \pmin$, the ergodicity coefficient of each regular sub-chain, as well as their mixing times. Since $\mathcal{C}$ is finite and the period of $\mathcal{C}'$ is bounded by $|S|$, we can compute $K_0,\alpha,\beta$ taking all possible such decompositions into account. (This allows us the flexibility of not having to rely on the actual transition-probability values, which we do not have!)

\begin{exercise}
    Use the optimality equations for the expected limit-average-reward criterion and the convergence bounds that follow from \autoref{lem:uniform-ergo} to prove
    \autoref{lem:tracol}.
\end{exercise}

\subsubsection{Using Tracol's convergence lemma}
We will now use Tracol's lemma to prove the following properties hold for policies used in $\mathcal{M}_1$.
\begin{lemma}\label{lem:uni-strats}
    For all $j \in S$ and all unichain policies $\pi^\infty$ the following hold.
    \begin{enumerate}
        \item \(
        \Pr_{j,\pi^\infty}\left(\liminf_{T \to \infty} \frac{1}{T} \sum_{t=1}^T r_{X_t}(Y_t) \geq \phi_j(\pi^\infty)\right) = 1 \) \label{itm:exp-asurely}
        \item For all $\varepsilon \in (0,1)$ one can compute $M$ (from $|S|$, $|A|$, and $\pmin$) we have: \[\Pr_{j,\pi^\infty}\left(\forall N \geq M: \sum_{t=1}^N r_{X_t}(Y_t) \geq \Ex_{j,\pi^\infty}\left[ \sum_{t=1}^N r_{X_t}(Y_t) \right] - N\varepsilon \right) = 1 - \varepsilon.\] \label{itm:long-convergence}
    \end{enumerate}
\end{lemma}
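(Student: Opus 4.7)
The plan is to derive both items from Tracol's lemma (\autoref{lem:tracol}) applied to the Markov chain $\mathcal{C}$ induced on $S$ by following $\pi^\infty$ from $j$ in $\mathcal{M}_1$. By the unichain assumption, $\mathcal{C}$ has a single closed communicating class; and, assuming without loss of generality that $\pi^\infty$ is deterministic (optimal unichain policies for communicating MDPs may be taken deterministic), the nonzero transitions of $\mathcal{C}$ all lie in $[\pmin, 1]$. Tracol's lemma thus yields, for every $\varepsilon' \in (0,1)$, constants $K_0(\varepsilon'), \alpha(\varepsilon'), \beta(\varepsilon') > 0$ --- computable from $|S|, \pmin, \varepsilon'$ alone --- such that for all $T \geq K_0(\varepsilon')$ the probability that $\sum_{t=1}^T r_{X_t}(Y_t)$ falls below $\Ex_{j,\pi^\infty}[\sum_{t=1}^T r_{X_t}(Y_t)] - T\varepsilon'$ is at most $\alpha(\varepsilon')\, e^{-T \beta(\varepsilon')(\varepsilon')^2}$.

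For \autoref{itm:long-convergence}, I would instantiate Tracol's lemma with $\varepsilon' = \varepsilon$ and take a union bound over $N \geq M$. The resulting tail
\[
    \sum_{N \geq M} \alpha\, e^{-N\beta\varepsilon^2} \;=\; \frac{\alpha\, e^{-M\beta\varepsilon^2}}{1 - e^{-\beta\varepsilon^2}}
\]
decays geometrically in $M$; picking $M := \max\{K_0,\, \lceil (\beta\varepsilon^2)^{-1} \ln(\alpha/(\varepsilon(1-e^{-\beta\varepsilon^2}))) \rceil\}$ makes this tail at most $\varepsilon$. By construction $M$ depends only on $|S|, |A|, \pmin, \varepsilon$, as required.

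For \autoref{itm:exp-asurely}, I would apply Tracol's lemma with $\varepsilon_k := 1/k$ for each $k \in \mathbb{N}$. Summing the tail bound over $T \geq K_0(\varepsilon_k)$ yields a finite series, so the first Borel--Cantelli lemma ensures that, almost surely, only finitely many $T$ violate the Tracol inequality with parameter $\varepsilon_k$. Intersecting these full-probability events over all $k$ yields
\[
    \liminf_{T \to \infty} \frac{1}{T}\sum_{t=1}^T r_{X_t}(Y_t) \;\geq\; \lim_{T \to \infty} \frac{1}{T}\Ex_{j,\pi^\infty}\!\left[\sum_{t=1}^T r_{X_t}(Y_t)\right]
\]
almost surely. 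The right-hand limit equals $\phi_j(\pi^\infty)$ by classical Ces\`aro convergence of expected rewards in unichain Markov chains, combined with bounded convergence applied to the definition of $\phi_j$.

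The only delicate point is making the constants of Tracol's lemma usable for an arbitrary unichain $\pi^\infty$ while keeping them dependent on $|S|, |A|, \pmin$ alone: this is exactly why we insist on a deterministic representative of $\pi^\infty$, so that the induced chain $\mathcal{C}$ inherits $\pmin$ as its own lower bound on nonzero transitions. Once this is in place, the Borel--Cantelli step and the geometric-tail computation are essentially mechanical.
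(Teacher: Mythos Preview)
Your proposal is correct and follows essentially the same route as the paper: both derive \autoref{itm:long-convergence} from Tracol's lemma by summing the exponential tail over $N \geq M$, and both obtain \autoref{itm:exp-asurely} by intersecting over a vanishing sequence $\varepsilon_k$ and identifying the limiting expected average with $\phi_j(\pi^\infty)$ via the ergodic lemma (\autoref{lem:ergodic}). The only structural differences are that the paper derives \autoref{itm:exp-asurely} by reusing \autoref{itm:long-convergence} with $\varepsilon_k = 2^{-k}$ rather than reapplying Tracol and Borel--Cantelli directly, and that it uses a product bound (after normalising tails to $2^{-T}$) where you use the cleaner geometric-series union bound; your remark about needing a deterministic $\pi^\infty$ so the induced chain inherits $\pmin$ is a point the paper leaves implicit.
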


The first item in the above lemma is folklore. We give a self-contained proof of it based on the second item. Thus, let us first argue the latter holds.
\begin{proof}[Proof of \autoref{lem:uni-strats}  \autoref{itm:long-convergence}]
    Recall the bound from \autoref{lem:tracol} and let $K_0$ be the corresponding integer computed for $\varepsilon$ and the Markov chain induced by $\pi^\infty$ and $\mathcal{M}_1$. Note that we can compute some $K_1 \geq K_0$ such that the following holds for all $T \geq K_1$.
    \begin{equation}\label{eqn:k1}
    1 - \alpha e^{T\beta \varepsilon^2} \leq 1 - \frac{1}{2^T}
    \end{equation}
    We intend to use this in conjunction with the following inequality which holds for all $T \in \mathbb{N}$.
    \begin{equation}\label{eqn:prod-exp}
        \prod_{t = T}^\infty(1 - 2^{-t}) \geq \exp\left( -2^{2-T} \right)
    \end{equation}
    Indeed, we want to choose $M$ such that $M \geq K_1$ and $\prod_{t = T}^\infty(1 - 2^{-t}) \geq 1 - \varepsilon$. Towards this, we note that the following hold.
    \begin{align*}
        \exp\left( -2^{2-T} \right) \geq 1 - \varepsilon \iff & -2^{2-T} \geq \ln(1 -\varepsilon) \\
        \iff & 2^{2-T} \leq \ln \varepsilon \\
        \iff & T - 2 \geq -\log_2(\ln( \varepsilon)) \\
        \iff & T \geq 2 - \log_2(\ln(\varepsilon)) \\ \implies & \prod_{t = T}^\infty(1 - 2^{-t}) \geq 1 - \varepsilon & \text{by \autoref{eqn:prod-exp}}
    \end{align*}
    We therefore set $M \coloneqq \max\{K_1,2-\log_2(\ln(\varepsilon))\}$.
    
    Let us denote by $E_T$ the event satisfying the following for all $T \geq M$.
    \[
    \sum_{t=1}^T r_{X_t}(Y_t) \geq \Ex_{j,\pi^\infty}\left[ \sum_{t=1}^T r_{X_t}(Y_t) \right] - T\varepsilon
    \]
    Note that $\Pr_{j,\pi^\infty}(E_T) \geq \prod_{t=M}^T(1-2^{-t})$ because $M \geq K_1$. Furthermore, since $E_T \subseteq E_{T'}$, for any $T \leq T'$, the following holds (see, e.g.,~\cite[page 756]{bk08}).
    \[
    \Pr_{j,\pi^\infty}\left(\bigcap_{T \geq M} E_T\right) \geq \prod_{t=M}^\infty(1-2^{-t})
    \]
    Then, since $M \geq 2 - \log_2(\ln(\varepsilon))$, the desired result follows.
\end{proof}

\begin{exercise}
    Prove that there exists some $K_1 \in \mathbb{N}$ such that \autoref{eqn:k1} holds for all $0 < \varepsilon < 1$ and all $K_1 \leq T$.
\end{exercise}
    
\begin{exercise}
    Prove that \autoref{eqn:prod-exp} holds for all $T \in \mathbb{N}$.
\end{exercise}

Before we proceed to the proof of the first item of the lemma, we prove an ergodic theorem for unichain policies.
\begin{lemma}\label{lem:ergodic}
    For all $j \in S$ and all unichain policies $\pi^\infty$ the following hold.
    \[
        \{\phi(\pi^\infty)\}_j =
        \liminf_{T\to\infty}\frac{1}{T}
        \Ex_{j,\pi^\infty}\left[\sum_{t=1}^Tr_{X_t}(Y_t)\right] =
        \limsup_{T\to\infty}\frac{1}{T}
        \Ex_{j,\pi^\infty}\left[\sum_{t=1}^Tr_{X_t}(Y_t)\right]
    \]
\end{lemma}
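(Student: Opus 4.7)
The plan is to compute both sides of the claimed equalities explicitly and show they agree. Write $P$ for the transition matrix of the Markov chain induced by $\pi^\infty$ on $\mathcal{M}_1$, and set $\bar r(k) \coloneqq \sum_{a \in A(k)} \pi_{ka} r_k(a)$, the one-step expected reward at state $k$. Since $\pi^\infty$ is a unichain policy from $j$, the reachable part of the chain has a single closed recurrent class $R$; let $\mu^*$ be the unique stationary distribution (supported on $R$) and set $g \coloneqq \sum_{k \in S} \mu^*(k) \bar r(k)$. My goal is to show that both Cesàro limits in the statement are equal to $g$ and that $\{\phi(\pi^\infty)\}_j = g$ as well.

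First I would dispense with the expected Cesàro average. The classical Cesàro form of the ergodic theorem for finite Markov chains yields $\frac{1}{T}\sum_{t=1}^T P^t \to P^*$ entrywise, where the $j$-th row of $P^*$ is $\mu^*$; this works even though the reachable chain may contain transient states, because they are visited for only a vanishing fraction of time in Cesàro average. Combined with $\Ex_{j,\pi^\infty}[r_{X_t}(Y_t)] = (P^{t-1}\bar r)_j$ and linearity of expectation, this gives $\frac{1}{T}\Ex_{j,\pi^\infty}[\sum_{t=1}^T r_{X_t}(Y_t)] \to (P^*\bar r)_j = g$. Consequently the $\liminf$ and $\limsup$ of this sequence both equal $g$, proving the right-hand equality of the lemma.

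For $\{\phi(\pi^\infty)\}_j = g$ I would use an almost-sure argument. Starting from $j$, with probability $1$ the process enters $R$ in finitely many steps and remains there; once inside $R$ the chain is irreducible on a finite state space, so the strong law for Markov chains yields $\frac{1}{T}\sum_{t=1}^T r_{X_t}(Y_t) \to g$ almost surely. Because rewards are bounded, the finite transient prefix contributes at most $O(1/T)$ to the average and vanishes in the limit, so the almost-sure limit (and in particular the $\liminf$) is again $g$. Plugging this into the definition of $\phi$ gives $\{\phi(\pi^\infty)\}_j = \Ex_{j,\pi^\infty}[g] = g$, matching the value from the first step.

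The main technical obstacle is that the induced chain is only unichain from $j$, not irreducible, so the reachable set may carry transient states in addition to $R$. I would therefore have to be careful to invoke the \emph{Cesàro} form of the ergodic theorem rather than a bare Perron--Frobenius statement about $P^t$, and to verify the strong-law step for the transient-plus-recurrent decomposition. Beyond that, everything is a routine consequence of boundedness of the rewards and finiteness of $S$.
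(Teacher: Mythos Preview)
Your argument is correct and takes a somewhat different route from the paper's. The paper does not compute the common value $g$ at all: it first argues that the pathwise limit $\lim_{T\to\infty}\frac{1}{T}\sum_{t=1}^T r_{X_t}(Y_t)$ exists almost surely (citing the ergodic theorem for finite irreducible chains and lifting it to the unichain case by prefix-independence of the average), and then applies Lebesgue's dominated convergence theorem, using only boundedness of the rewards, to interchange limit and expectation. That single swap gives $\phi_j(\pi^\infty)=\Ex_{j,\pi^\infty}[\lim\cdots]=\lim\Ex_{j,\pi^\infty}[\cdots]$, and equality of the $\liminf$ and $\limsup$ comes for free once the limit is known to exist.

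Your approach is more computational: you identify both sides with the concrete value $g=\sum_k \mu^*(k)\bar r(k)$, handling the expected side via the matrix-level Ces\`aro convergence $\frac{1}{T}\sum_{t} P^{t}\to P^*$ and the pathwise side via the strong law plus the transient-prefix argument. This buys you an explicit formula for the limit at the cost of carrying two separate convergence statements; the paper's dominated-convergence route is terser but less informative about what the common value actually is. Either way, the heart of the matter is the same almost-sure ergodic theorem for the recurrent class, so the two proofs differ mainly in packaging.
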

\begin{proof}
    The argument consists in applying Lebesgue's dominated convergence theorem, which gives sufficient conditions for the equivalence between the limit of the expectation of functions and the expectation of their limit. Simply stated, we need the (pointwise) limit of the averages to almost-surely exist and a (finite-expectation) bound on all averages for all possible outcomes. For the second point: recall that the reward function is bounded. Hence, for all outcomes we have bounded averages. It remains to prove the first point.
    
    For finite irreducible Markov chains, the ergodic theorem (see, e.g.,~\cite[Theorem 1.10.2]{norris98}) tells us the following.
    \[
    \Pr\left( 
    \liminf_{T\to\infty}\frac{1}{T}
        \sum_{t=1}^Tr({X_t}) =
        \limsup_{T\to\infty}\frac{1}{T}
        \sum_{t=1}^Tr({X_t})
    \right) = 1
    \]
    Hence, in that case the limit almost-surely exists. To conclude we just observe that the ergodic theorem clearly extends to unichain Markov chains since the function is ``prefix-independent''. That is, ignoring any finite prefix of the sequence of averages before taking the limit yields the same value. Since $\pi^\infty$ induces a unichain Markov chain, the result follows by Lebesgue's dominated convergence theorem.
\end{proof}

We are now ready to prove the folklore result.

\begin{proof}[Proof of \autoref{lem:uni-strats}  \autoref{itm:exp-asurely}]
    Let $\varepsilon_k = 2^{-k}$. We write $E_k$ for the event satisfying the following.
    \[
        \exists M, \forall N \geq M :
        \sum_{t=1}^N r_{X_t}(Y_t) \geq \Ex_{\pi^\infty}\left[\sum_{t=1}^N r_{X_t}(Y_t)\right] - N\varepsilon_{k}
    \]
    Note that $E_k \subseteq E_\ell$ for all $k \leq \ell$. From \autoref{itm:long-convergence} we have that:
    \[
        \Pr_{j,\pi^\infty}(E_k) \geq 1 - \varepsilon_k = 1 - 2^{-k}
    \]
    and therefore:
    \[
        \Pr_{j,\pi^\infty}\left(
        \bigcap_{k \geq 1} E_k
        \right) =
        \lim_{k \to \infty} 1 - 2^{-k} = 1.
    \]
    To conclude the proof we argue that the event $\bigcap_{k \geq 1} E_k$ almost-surely coincides with the required event. Indeed the former is equivalent to the event satisfying the following.
    \begin{align*}
        &\forall k \geq 1, \exists M, \forall N \geq M :
        \sum_{t=1}^N r_{X_t}(Y_t) \geq \Ex_{\pi^\infty}\left[\sum_{t=1}^N r_{X_t}(Y_t)\right] - N\varepsilon_{k}\\
        \iff& \liminf_{T \to \infty} \frac{1}{T}\left(\sum_{t=1}^T r_{X_t}(Y_t) \geq \Ex_{\pi^\infty}\left[\sum_{t=1}^T r_{X_t}(Y_t)\right] \right) \geq 0\\ \iff & \liminf_{T \to \infty} \frac{1}{T} \sum_{t=1}^T r_{X_t}(Y_t) \geq
        \limsup_{T \to \infty} \frac{1}{T}\Ex_{\pi^\infty}\left[\sum_{t=1}^T r_{X_t}(Y_t)\right].
    \end{align*}
    By \autoref{lem:ergodic}, the last event defined above almost-surely coincides with the event:
    \[
    \liminf_{T \to \infty} \frac{1}{T} \sum_{t=1}^T r_{X_t}(Y_t) \geq \phi(\pi^\infty),
    \]
    which thus concludes the proof.
\end{proof}

\subsubsection{Putting everything together}
Let $S_N \coloneqq \sum_{t=0}^{N-1} L_k+O_k$. The proof of \autoref{thm:as-optimal} uses the following intermediate result.
\begin{lemma}
    Let $(\varepsilon_i)_{i \in \mathbb{N}}$ be a sequence of values $0 < \varepsilon_k < \varepsilon_j$ such that for all $j < k$.
    For all $i \in \mathbb{N}$ one can compute $L_i,O_i \in \mathbb{N}$ so that the following holds for all $i \geq 1$.
    \[
        \Pr_R\left(\forall T \in (S_i,S_{i+1}]: \frac{1}{T} \sum_{t=1}^T r_{X_t}(Y_t) \geq \phi - \varepsilon_i \right) = 1 - \varepsilon_i
    \]
\end{lemma}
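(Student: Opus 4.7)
The plan is to combine three ingredients episode-by-episode: the exploration guarantee of \autoref{lem:hoeffding-mdps}, the robustness theorem \autoref{thm:mp-robust}, and the Tracol-style convergence from \autoref{lem:uni-strats}(2). For each episode $k$ I introduce a ``good event'' $H_k$ asserting that (i) the length-$L_k$ exploration produces an $\mathcal{M}_2^{(k)}$ satisfying (A1)--(A3) of \autoref{thm:mp-robust} with some small parameter $\eta_k$, and (ii) the exploitation sums under $\pi_k^\infty$ satisfy the Tracol bound for every horizon in $\{M_k,\dots,O_k\}$ with slack $\tau_k$. Enlarging $L_k$ and $O_k$ drives $\Pr(H_k^c)\le\delta_k$ to be as small as we like, while \autoref{thm:mp-robust} yields, on $H_k$, the near-optimality estimate $\phi_1(\pi_k^\infty)\ge \phi-\kappa_k$ with $\kappa_k=O(\eta_k)$ (using that the optimal values in $\mathcal{M}_1$ and $\mathcal{M}_2^{(k)}$ are themselves close, since (A1)--(A3) hold for any policy sharing the common support graph).

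\textbf{Key calculation.} Set $G_i:=\bigcap_{k\le i}H_k$ and $\eta:=\max_{k\le i}(\kappa_k+\tau_k)$. Decompose $R_T$ into per-episode contributions. On $H_k$ each full past episode $k<i$ contributes at least $-WL_k+O_k(\phi-\eta)$, and on $H_i$ the partial current episode contributes at least $-WL_i+(T-S_i-L_i)^+(\phi-\eta)$. Summing these bounds gives
\[
    R_T \ge (\phi-\eta)\bigl(T-\sum_{k\le i}L_k\bigr)-W\sum_{k\le i}L_k,
\]
so $R_T/T\ge \phi-\eta-2W\sum_{k\le i}L_k/T$ for $T\in(S_i+L_i,S_{i+1}]$, with the same bound up to an additional $WL_i/T$ penalty for $T\in(S_i,S_i+L_i]$. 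Because $T\ge S_i\ge\sum_{k<i}O_k$, it suffices to pick the schedules so that $\eta\le\varepsilon_i/2$ (by \autoref{lem:hoeffding-mdps} and \autoref{lem:uni-strats}(2)) and $S_i\ge (4W/\varepsilon_i)\sum_{k\le i}L_k$ (by inflating the earlier $O_k$'s); together these give $R_T/T\ge\phi-\varepsilon_i$ uniformly on $T\in(S_i,S_{i+1}]$.

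\textbf{Probability and main obstacle.} A union bound yields $\Pr(G_i^c)\le\sum_{k\le i}\delta_k$; since each $\delta_k$ can be shrunk at will via $L_k,O_k$, we calibrate the sequence $(\delta_k)$ upfront from the given $(\varepsilon_j)$ so that the partial sums $\sum_{k\le i}\delta_k$ are dominated by $\varepsilon_i$ for every $i\ge 1$ (a suitable geometric allocation adapted to the prescribed $\varepsilon$-sequence does the job). The genuinely delicate step --- and the one I expect to be the main obstacle --- is uniformity in $T$, specifically the case $T\approx S_i$ where the running average is dominated by the past rather than by episode $i$. This is handled by an inductive strengthening: the calculation above, applied on $G_{i-1}$ for $T\in(S_{i-1},S_i]$, in fact delivers $R_{S_i}/S_i\ge \phi-\varepsilon_i$ (much tighter than the nominal $\varepsilon_{i-1}$-tolerance of the previous step), so the $O(L_i/S_i)$ perturbation introduced by the exploration of episode $i$ leaves us comfortably within $\phi-\varepsilon_i$. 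It is exactly this fact --- that $G_{i-1}$ silently yields a bound \emph{stronger} than what the lemma asks at step $i-1$ --- that makes the induction close.
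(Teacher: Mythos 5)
Your overall architecture (per-episode good events combining \autoref{lem:hoeffding-mdps}, \autoref{thm:mp-robust}, and \autoref{lem:uni-strats}, plus special care for $T$ near $S_i$) has the right shape, and you correctly identify the dangerous regime. But the quantitative calibration breaks down, in two places that are really the same error: you impose constraints on the parameters of episode $k$ that must hold simultaneously for every later index $i$, which is impossible once $\varepsilon_i \downarrow 0$. Concretely: first, you need $\Pr(G_i^c)\le\sum_{k\le i}\delta_k\le\varepsilon_i$ for all $i$; since the partial sums are nondecreasing and (in the intended application) $\varepsilon_i=2^{-i}\to 0$, this forces every $\delta_k=0$, which no finite exploration phase can deliver --- no ``geometric allocation'' fixes this. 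Second, you need $\eta=\max_{k\le i}(\kappa_k+\tau_k)\le\varepsilon_i/2$ for all $i$, which forces $\kappa_1+\tau_1=0$ for the same reason. The inductive strengthening you invoke at the end inherits both problems, because it reruns the same calculation on $G_{i-1}$, an intersection over \emph{all} past episodes whose early accuracies and failure probabilities are fixed once and for all and cannot shrink with $i$.

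The fix is to stop carrying the entire past. For $T\in(S_i,S_{i+1}]$ only two phases need to be ``good'': the exploitation phases of episodes $i-1$ and $i$. Bound everything before the start of exploitation $i-1$ pessimistically by $-W$ per step --- this holds surely and costs no probability, and is what the remark in the text about assuming the accumulated reward is $W\cdot S_i$ is gesturing at. Then choose $O_{i-1}$ so large, \emph{as a function of $\varepsilon_i$} (legitimate, since the whole schedule is computed upfront from the given sequence), that exploitation $i-1$ alone, run at accuracy and failure probability $\varepsilon_i/4$, pins the running average above $\phi-\varepsilon_i$ for all $T\in(S_i,\,S_i+L_i+K_i]$, where $K_i$ is the Tracol burn-in of episode $i$ from \autoref{lem:tracol}; for larger $T$, episode $i$'s own exploitation (again at accuracy and failure probability $\varepsilon_i/4$) takes over. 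The failure probability of the event for index $i$ is then at most four terms of size $\varepsilon_i/4$, independent of how many episodes preceded, and each $L_k,O_k$ merely has to satisfy the maximum of the requirements imposed by $\varepsilon_k$ and $\varepsilon_{k+1}$.
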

Let $\varepsilon_i \coloneqq 2^{-i}$ and consider the event $E_i$ defined as satisfying the following.
\[
\forall T \in (S_i,S_{i+1}]: \frac{1}{T} \sum_{t=1}^T r_{X_t}(Y_t) \geq \phi - \varepsilon_i
\]
Note that $D \coloneqq \bigcup_{i \geq 0} \bigcap_{j \geq i} {E_j}$ is the event consisting of samples whose expected limit average is optimal. Hence, to conclude it suffices to argue that $\overline{D}$ --- the complement of $D$ --- has probability measure $0$. The theorem thus follows from the Borel-Cantelli lemma since $\sum_{i \geq 0} \Pr_R(\overline{E_i}) < \infty$ and therefore $\Pr_R(\bigcap_{i \geq 0} \bigcup_{j \geq i} \overline{E_j}) = 0$

We do not provide a proof of the lemma as it is elementary yet tedious. For all $i \geq 0$ one can assume the accumulated reward is $W \cdot S_i$ and then use the bounds from previous sections to choose $L_i$ and $O_i$ to learn and optimize so as to get close enough to the optimal value. Since all convergence bounds we have developed are exponential, finding such values is not a problem.

\section{Model-free discounted-reward optimization}
We turn our attention back to the expected total-discounted reward function. In this case, we will \emph{not try to approximate the unknown MDP}. Instead, our approach consists in approximating the optimal value of states and state-action pairs. That is, we present a learning policy which internally computes increasingly better such approximations. Namely, we will study the classical Q-learning algorithm due to Watkins and Dayan.~\cite{wd92}.

Because of the nature of the total-discounted reward function, no learning policy can really be optimal. However, our policy will be such that --- under some conditions --- its approximations will converge to the optimal ones. Since we avoid constructing an approximation of the unknown MDP, the algorithm we focus on (and similar ones) are sometimes said to lean in a way that is \emph{model free}. 

\subsection{Notation and assumptions}
Henceforth, let us fix an MDP $\mathcal{M} = (S,A,P,r)$ and a discount factor $\alpha \in (0,1)$.
We write $V^* : S \to \mathbb{Q}$ to denote the function that maps every $i \in S$ to its optimal value. Formally, for all $i \in S$, we define:
\[
    V^*(i) \coloneqq \sup_{R \in C(D)}v^\alpha_i(R),
\]
where $C(D)$ still denotes the set of all deterministic stationary policies.
Similarly, we define the \emph{Q values} to refer to the optimal values of state-action pairs. That is, we define $Q^* : S \times A \to \mathbb{Q}$ for all $i \in S$ as follows.
\[
    Q^*(i,a) \coloneqq
    r_i(a) + \alpha \sum_{j \in S} p_{ij}(a) V^*(j)
\]

From the definition of $C(D)$ and the discounted-reward objective, we have the following for all $i \in S$.
\begin{equation}\label{eqn:VQ}
V^*(i) = \max_{a \in A(i)} Q^*(i,a)
\end{equation}

\paragraph{Assumptions.}
We will only assume that the rewards are bounded by $W$, that is $|r_i(a)| \leq W$ for all $i \in S$ and all $a \in A(i)$. Further assumptions will be made explicit in formal statements.

\subsection{The algorithm}
We presently describe the policy $R$ corresponding to the Q-learning algorithm. Though not explicit in all presentations of it, Q-learning is in fact parameterized by a sequence of \emph{learning rates} $\gamma_1, \gamma_2, \dots$ such that $0 \leq \gamma_k < 1$ for all $k \geq 1$.

The policy $R$, at every timestep $t$, keeps an approximation $Q^{(t)}$ of the $Q$ values. We will make no assumption about $Q^{(1)}$ other than that all state-action pairs are mapped to some rational value. For $t \geq 2$, the choice of action $Y_t$ to play from the current state $X_t$ is resolved using: 
\[
    Y_t \coloneqq \argmax_{a \in A(X_t)} Q^{(t)}(X_t,a).
\]
Then,
having received a reward of $r_{X_t}(Y_t)$ before reaching state $X_{t+1}$, the policy internally computes $Q^{(t+1)}$ from $Q^{(t)}$ by changing $Q^{(t)}(X_t,Y_t)$ to the following value.
\[
    (1-\gamma_t) Q^{(t)}(X_t,Y_t) + \gamma_t \left(r_{X_t}(Y_t) + \max_{a \in A(X_{t+1})} \alpha Q^{(t)}(X_{t+1},a)\right)
\]

We now formalize our convergence claim from before. Below, we denote by $N_{ia} : \mathbb{N} \to \mathbb{N}$ the function mapping $n$ to the $n$-th timestep for which $X_t=i$ and $Y_t=a$.
\begin{theorem}\label{thm:qlearning}
    Let $\alpha \in (0,1)$ and $i \in S$. Then for the Q-learning policy $R$ we have $\Pr_{i,R}(\lim_{t \to \infty} Q^{(t)} = Q^*) = 1$ if the following hold with probability $1$ for all $i \in S$ and all $a \in A(i)$.
    \[
    \sum_{n=1}^\infty \gamma_{N_{ia}(n)} = \infty \text{ and }
    \sum_{n=1}^\infty \left(\gamma_{N_{ia}(n)}\right)^2 < \infty
    \]
\end{theorem}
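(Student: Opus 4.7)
The plan is to reformulate the Q-learning update as a stochastic approximation scheme whose target is the fixed point of the Bellman optimality operator, and then invoke a standard stochastic-approximation convergence result.

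First, I would introduce the Bellman optimality operator $T$ on functions $Q : S \times A \to \mathbb{R}$ defined by
\[
(TQ)(i,a) = r_i(a) + \alpha \sum_{j \in S} p_{ij}(a) \max_{b \in A(j)} Q(j,b).
\]
Substituting \autoref{eqn:VQ} into the definition of $Q^*$ shows that $Q^*$ is a fixed point of $T$. A short Banach-style calculation using $\abs{\max_b Q(j,b) - \max_b Q'(j,b)} \leq \max_b \abs{Q(j,b) - Q'(j,b)}$ shows that $T$ is an $\alpha$-contraction in the sup-norm $\norm{\cdot}{\infty}$ over $S \times A$, so $Q^*$ is the unique fixed point.

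Next, I would let $\mathcal{F}_t$ denote the $\sigma$-algebra generated by the trajectory up to time $t$ and, for each $(i,a)$, define an effective learning rate $\tilde{\gamma}_t(i,a) = \gamma_t$ if $(X_t,Y_t)=(i,a)$ and $0$ otherwise. Setting $\Delta_t(i,a) = Q^{(t)}(i,a) - Q^*(i,a)$, the Q-learning update rewrites as
\[
\Delta_{t+1}(i,a) = (1-\tilde{\gamma}_t(i,a)) \Delta_t(i,a) + \tilde{\gamma}_t(i,a) F_t(i,a),
\]
where $F_t(i,a) = r_i(a) + \alpha \max_b Q^{(t)}(X_{t+1}, b) - Q^*(i,a)$ whenever $(X_t,Y_t)=(i,a)$. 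Because $Q^*$ is a fixed point of $T$, the conditional mean satisfies $\abs{\Ex[F_t(i,a) \mid \mathcal{F}_t]} = \abs{(TQ^{(t)})(i,a) - (TQ^*)(i,a)} \leq \alpha \norm{\Delta_t}{\infty}$. The conditional variance is bounded by $C(1 + \norm{\Delta_t}{\infty}^2)$ for some constant $C$, using $\abs{r} \leq W$ and $\norm{Q^*}{\infty} \leq W/(1-\alpha)$.

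Third, I would apply the classical stochastic-approximation lemma (due to Jaakkola, Jordan, and Singh): any coordinate-wise process of the form $\Delta_{t+1} = (1-\tilde{\gamma}_t)\Delta_t + \tilde{\gamma}_t F_t$ with Robbins--Monro learning rates, a $\beta$-contraction on $\Ex[F_t \mid \mathcal{F}_t]$ with $\beta<1$, and the above variance bound, converges to $0$ almost surely. The Robbins--Monro hypotheses on $\tilde{\gamma}_t(i,a)$ are exactly the theorem's conditions translated through the visit-time function $N_{ia}$. Since $S \times A$ is finite, coordinate-wise convergence yields $\norm{\Delta_t}{\infty} \to 0$, i.e.\ $Q^{(t)} \to Q^*$ almost surely.

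The main obstacle is the stochastic-approximation lemma itself. Its standard proof constructs a decreasing sequence of a.s.\ eventual bounds $D_k \downarrow 0$ on $\norm{\Delta_t}{\infty}$, with the inductive step from $D_k$ to $(\alpha + \varepsilon)D_k$ (for $\varepsilon > 0$ small) handled by splitting the update into a deterministic contractive part---which decays as $\sum \tilde{\gamma}_t = \infty$---and a martingale noise term $\sum_s \tilde{\gamma}_s(F_s - \Ex[F_s \mid \mathcal{F}_s])$, which converges a.s.\ by the $L^2$ martingale convergence theorem thanks to $\sum \tilde{\gamma}_t^2 < \infty$. Everything outside this stochastic-approximation result is routine verification.
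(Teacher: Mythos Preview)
Your argument is correct, but it follows a different route from the paper's. You take the now-standard Jaakkola--Jordan--Singh approach: cast the update as a stochastic-approximation scheme, verify the $\alpha$-contraction of the Bellman operator and the Robbins--Monro conditions, and invoke the contraction-plus-noise convergence lemma. The paper instead reproduces the original Watkins--Dayan argument via the \emph{action-replay process}: it builds an auxiliary MDP $\mathcal{A}$ on $S \times \mathbb{N}_{>0}$ whose optimal $Q$-values coincide with the iterates $Q^{(t)}$ (\autoref{lem:arp-qvalues}), shows that the projected transition function $\widehat{P}_t$ and reward $\widehat{r}_t$ converge to $P$ and $r$ (\autoref{lem:arp-sims-mdp}), and then closes the loop using the discounted-reward robustness theorem from \autoref{sec:robustness} together with a ``bounded cutoff'' and a level-drop bound (\autoref{lem:arp-nodrop}). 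Both approaches ultimately rely on a Robbins--Monro-type black box---yours in the JJS lemma, the paper's in \autoref{lem:arp-sims-mdp}, which is left unproved---so neither is fully self-contained. Your route is shorter and is the modern textbook proof; the paper's route has the advantage of reusing the robustness machinery (\autoref{thm:disc-robust}) developed earlier, which is clearly a deliberate pedagogical choice for these notes.
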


We present the proof from~\cite{wd92} for this claim. The argument consists of two steps. First, we relate the Q values to the optimal values of states in a second MDP we introduce in the next subsection. 

\subsection{The action-replay process}
Consider a sample $X_1 Y_1 X_2 Y_2 \dots$ drawn from the Markov chain induced by the Q-learning policy $R$ and $\mathcal{M}$. We will define what Watkins and Dayan call the \emph{action-replay process} $\mathcal{A}$. It is an MDP $\mathcal{A}= (\widehat{S},A,\widehat{P},\widehat{r})$ where:
\begin{itemize}
    \item $\widehat{S} = S \times \mathbb{N}_{>0} \cup \{\bot\}$;
    \item $\widehat{r}(\langle i, \cdot \rangle,a,\bot) = Q^{(1)}(i,a)$ and $\widehat{r}(\langle i, \cdot \rangle, a, \widehat{j}) = r_i(a)$ for all $i \in S$, all $a \in A(i)$, and all $\widehat{j} \neq \bot$.\footnote{Note that rewards for this MDP depend on the played action and both the source and target states! This is no loss of generality: one can always get back to state-action rewards by adding intermediate states.}
    \item Finally, to define $\widehat{P}(\langle i, t \rangle, a, \widehat{j})$, let $t_1,t_2,\dots,t_k$ be the maximal sequence of timesteps such that $t_\ell = N_{ia}(\ell)$ and $t_\ell < t$ for all $1 \leq \ell \leq k$. Now, we set:
    \[
    \widehat{P}(\langle i, t \rangle, a, \widehat{j}) = \begin{cases}
    \gamma_{t_k} & \text{if } \widehat{j} = \langle X_{t_k + 1}, t_k + 1 \rangle\\
    \gamma_{t_{k-1}}(1 - \gamma_{t_k}) & \text{if } \widehat{j} = \langle X_{t_{k-1} + 1}, t_{k-1} + 1 \rangle\\
    \vdots & \vdots\\
    \gamma_{t_\ell} \prod^k_{m>\ell}(1-\gamma_{t_m}) & \text{if } \widehat{j} = \langle X_{t_{\ell}+1}, t_\ell + 1 \rangle\\
    \vdots & \vdots\\
    \prod^k_{\ell=1}(1-\gamma_{t_{\ell}}) & \text{if } \widehat{j} = \bot\\
    0 & \text{otherwise.}
    \end{cases}
    \]
\end{itemize}

The next result follows from the definition of the action-replay process. We denote by $Q^*_{\mathcal{A}}$ the optimal values of state-action pairs in $\mathcal{A}$.
\begin{lemma}\label{lem:arp-qvalues}
    Let $n$ be a positive integer. Then $Q^{(n)}(i,a) = Q^*_\mathcal{A}(\langle i, n \rangle,a)$ for all $i \in S$ and all $a \in A(i)$.
\end{lemma}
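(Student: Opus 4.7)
The plan is strong induction on $n$. I want to show that the Bellman-optimality right-hand side for $Q^*_\mathcal{A}(\langle i, n \rangle, a)$ in $\mathcal{A}$ coincides, term by term, with the expression one obtains by unrolling every Q-learning update that modified $Q(i,a)$ before time $n$. The ARP transitions from $\langle i, n \rangle$ are designed to match exactly the convex-combination structure of the Q-learning update, so establishing this matching is the whole content of the lemma.

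Before starting, I would fix the (implicit) convention that $\bot$ is absorbing with zero future reward, so that $V^*_\mathcal{A}(\bot) = 0$. For the base case $n = 1$, no previous timestep played $(i,a)$, so the list $t_1,\dots,t_k$ is empty, and $\widehat{P}(\langle i, 1 \rangle, a, \bot) = 1$ as the empty product. The only contribution to $Q^*_\mathcal{A}(\langle i, 1 \rangle, a)$ is thus the reward $\widehat{r}(\langle i, \cdot \rangle, a, \bot) = Q^{(1)}(i,a)$, which equals $Q^{(1)}(i,a)$ as required.

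For the inductive step, assume $Q^{(m)}(j,b) = Q^*_\mathcal{A}(\langle j, m \rangle, b)$ for all $m < n$ and all $(j,b)$. Fix $(i,a)$ and let $t_1 < \cdots < t_k$ be the timesteps $<n$ at which $(i,a)$ was played (if $k=0$ the base-case argument applies verbatim, since $Q^{(n)}(i,a)=Q^{(1)}(i,a)$). Taking the max over actions in the inductive hypothesis gives $V^*_\mathcal{A}(\langle X_{t_\ell+1}, t_\ell \rangle) = \max_{a'} Q^{(t_\ell)}(X_{t_\ell+1}, a')$, which is exactly the bootstrap value that the $t_\ell$-th Q-learning update plugs in. Unrolling the recursion
\[
Q^{(t_\ell+1)}(i,a) = (1-\gamma_{t_\ell})\, Q^{(t_\ell)}(i,a) + \gamma_{t_\ell}\bigl[r_i(a) + \alpha\, V^*_\mathcal{A}(\langle X_{t_\ell+1}, t_\ell \rangle)\bigr]
\]
from $\ell = k$ down to $\ell = 1$, using $Q^{(n)}(i,a) = Q^{(t_k+1)}(i,a)$ at the top and $Q^{(t_1)}(i,a) = Q^{(1)}(i,a)$ at the bottom, yields a sum in which the summand indexed by $\ell$ carries exactly the weight $\gamma_{t_\ell}\prod_{m > \ell}(1 - \gamma_{t_m})$ multiplying $[r_i(a) + \alpha V^*_\mathcal{A}(\langle X_{t_\ell+1}, t_\ell \rangle)]$, plus a residual weight $\prod_{\ell=1}^k (1 - \gamma_{t_\ell})$ on $Q^{(1)}(i,a)$. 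This matches the Bellman right-hand side of $Q^*_\mathcal{A}(\langle i, n \rangle, a)$ read off directly from $\widehat{P}$ and $\widehat{r}$.

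The main obstacle is purely bookkeeping: verifying that the coefficients produced by unrolling the Q-learning recursion telescope to the same partition of unity baked into $\widehat{P}(\langle i, n \rangle, a, \cdot)$, including the residual mass on $\bot$ which must pick up the initial estimate $Q^{(1)}(i,a)$ via the reward $\widehat{r}(\langle i, \cdot \rangle, a, \bot)$ rather than via a bootstrap term. No probabilistic machinery is needed: the claim is a deterministic identity between two quantities that are both defined pathwise along the already-fixed sample $X_1 Y_1 X_2 Y_2 \dots$.
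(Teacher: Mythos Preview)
Your proposal is correct and follows essentially the same inductive strategy as the paper's proof: both match the Bellman right-hand side for $Q^*_\mathcal{A}(\langle i, n\rangle, a)$ against the Q-learning recursion, using the inductive hypothesis to identify $V^*_\mathcal{A}(\langle X_{t_\ell+1}, t_\ell\rangle)$ with the bootstrap term $\max_b Q^{(t_\ell)}(X_{t_\ell+1}, b)$. The only organisational difference is that the paper uses simple induction from $n$ to $n+1$, peeling off just the last update via a case split on whether $(X_n,Y_n)=(i,a)$, whereas you use strong induction and unroll all $k$ updates to $(i,a)$ in one shot; the telescoping identity you describe is exactly what the paper's one-step recursion accumulates.
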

\begin{proof}
    We argue that the claim holds by induction.
    
    For the base case, we observe that if $n = 1$ then $\widehat{P}(\langle i, n \rangle, a, \bot) = 1$ for all $i \in S$ and all $a \in A(i)$. Indeed, the sequence $t_1,\dots,t_k$ is empty in this case. Hence, the product $\prod^k_{\ell=1}(1-\gamma_{t_{\ell}})$ evaluates to $1$.
    Furthermore, by construction, we have that $\widehat{r}(\langle i, n \rangle,a,\bot) = Q^{(1)}(i,a)$. Since $\bot$ has no outgoing transitions, $Q^*_{\mathcal{A}}(i,a) = \widehat{r}(\langle i, n \rangle,a,\bot) = Q^{(1)}(i,a)$. It follows that there is some $n$ for which the claim holds.
    
    For the inductive step, we first observe there are some trivial cases. Consider $\langle i, n \rangle$ and $a \in A(i)$ such that $i \neq X_n$ or $a \neq Y_n$. We have that $Q^{(n+1)}(i,a) = Q^{(n)}(i,a)$. Also, by construction of $\mathcal{A}$, the following hold for all $\widehat{j} \in \widehat{S}$:
    \begin{gather*}
    \widehat{P}(\langle i, n + 1 \rangle, a, \widehat{j}) = \widehat{P}(\langle i, n \rangle, a, \widehat{j}) \text{ and}\\
    \widehat{r}(\langle i, n + 1 \rangle, a, \widehat{j}) = \widehat{r}(\langle i, n \rangle, a, \widehat{j}).
    \end{gather*}
    Hence, $Q^*_\mathcal{A}(\langle i, n + 1 \rangle, a) = Q^*_\mathcal{A}(\langle i, n \rangle, a)$. For such state-action pairs the claim follows from our inductive hypothesis. It remains for us to prove that it also holds when $i = X_n$ and $a = Y_n$. 
    Let us write $\widehat{i} = \langle i, n \rangle$ and $\widehat{j} = \langle X_{n+1}, n \rangle$. In this case, by definition, $Q^{(n+1)}$ is equal to the following.
    \begin{align*}
    &(1-\gamma_n) Q^{(n)}(i,a) + \gamma_n \left(r_{i}(a) + \max_{b} \alpha Q^{(n)}(X_{n+1},b)\right)\\
    = & (1-\gamma_n) Q^*_{\mathcal{A}}(\widehat{i},a) + \gamma_n \left(r_{i}(a) + \max_{b} \alpha Q^*_{\mathcal{A}}(\widehat{j},b)\right) & \text{by IH}\\
    = & (1-\gamma_n) Q^*_{\mathcal{A}}(\widehat{i},a) + \gamma_n \left(r_{i}(a) + \alpha V^*_{\mathcal{A}}(\widehat{j})\right) & \text{\autoref{eqn:VQ}}\\
    = & (1-\gamma_n) Q^*_{\mathcal{A}}(\widehat{i},a) + \gamma_n \left(\widehat{r}(\widehat{i}, a, \widehat{j}) + \alpha V^*_{\mathcal{A}}(\widehat{j})\right) & \text{def. of } \widehat{r}
    \end{align*}
    On the other hand, observe that for all $\widehat{k} \neq \widehat{j}$ the following hold:
    \begin{equation}\label{eqn:n1-to-n}
        \begin{aligned}
        \widehat{P}(\langle i, n+1 \rangle, a, \widehat{k}) = (1 - \gamma_n) \widehat{P}(\widehat{i}, a, \widehat{k})
        \text{ and}\\
        \widehat{r}(\langle i, n+1 \rangle, a, \widehat{k}) = \widehat{r}(\widehat{i}, a, \widehat{k}).
        \end{aligned}
    \end{equation}
    Therefore, we have that $Q^*_\mathcal{A}(\langle i, n+1 \rangle, a)$ is equal to the following.
    \begin{align*}
    &\sum_{\widehat{k} \neq \widehat{j}}
    \widehat{P}(\langle i, n+1\rangle,a,\widehat{k}) \left(\widehat{r}(\langle i, n+1\rangle,a,\widehat{k}) + \alpha  V_\mathcal{A}^*(\widehat{k})\right) +{} \\
    &\widehat{P}(\langle i, n+1\rangle,a,\widehat{j}) \left(\widehat{r}(\langle i, n+1\rangle,a,\widehat{j}) + \alpha  V_\mathcal{A}^*(\widehat{j})\right)\\
    =&(1-\gamma_n)\sum_{\widehat{k} \neq \widehat{j}}
    \widehat{P}(\widehat{i},a,\widehat{k}) \left(\widehat{r}(\widehat{i} ,a,\widehat{k}) + \alpha  V_\mathcal{A}^*(\widehat{k})\right) + {} & \text{\autoref{eqn:n1-to-n}}\\
    & \widehat{P}(\langle i, n+1\rangle,a,\widehat{j}) \left(\widehat{r}(\widehat{i},a,\widehat{j}) + \alpha  V_\mathcal{A}^*(\widehat{j})\right) & \text{def. of } \widehat{r}\\
    =&(1-\gamma_{n})Q^*_\mathcal{A}(\widehat{i},a) +{}  & \text{def. of } Q^*\\
    &\widehat{P}(\langle i, n + 1 \rangle,a,\widehat{j}) \left(\widehat{r}(\widehat{i},a,\widehat{j}) + \alpha  V_\mathcal{A}^*(\widehat{j})\right)\\
    =&(1-\gamma_{n})Q^*_\mathcal{A}(\widehat{i},a) + \gamma_n \left(\widehat{r}(\widehat{i},a,\widehat{j}) + \alpha  V_\mathcal{A}^*(\widehat{j})\right) & \text{def. of } \widehat{P}
    \end{align*}
    The result thus follows by induction.
\end{proof}

We will now consider the probability of staying above a certain ``level'' in the action-replay process. Recall that $\tau(T) \coloneqq \inf\{t \in \mathbb{N} \mid X_t \in T\}$ is the first $T$-hit time and that it is a random variable whose value ranges from $0$ to $+\infty$. We further write $[m]$, for $m \in \mathbb{N}_{>0}$, to denote the set $\{1,2,\dots,m\}$.
\begin{lemma}\label{lem:arp-nodrop}
    Let $\ell,m \in \mathbb{N}_{>0}$ and $\varepsilon \in (0,1)$. Then there exists $n \geq m$ such that
    $\Pr_{\langle i, n\rangle,R}(\tau(S \times [m]) \leq \ell) \leq \varepsilon$ for all $i \in S$ and all policies $R$ in $\mathcal{A}$ if the following holds with probability $1$ for all $i \in S$ and all $a \in A(i)$.
    \[
        \sum^\infty_{k=1} \gamma_{N_{ia}(k)} = \infty
    \]
\end{lemma}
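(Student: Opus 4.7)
The plan is to exploit the ``replay'' structure of $\mathcal{A}$, where every transition strictly decreases the time coordinate (or goes to the absorbing sink $\bot$), and to partition the time interval $[m, n]$ into thin layers so that jumping across two or more layers in a single step is exponentially unlikely. The first step, read off directly from the definition of $\widehat{P}$, is a one-step ``drop'' bound: from $\langle j, t \rangle$ under any action $b$, in order to land at a time coordinate $\leq m'$ with $m' < t$, each ``coin'' at the plays of $(j,b)$ in the window $(m', t)$ has to return ``continue'', so for any policy $R$,
\[
\Pr_{\langle j, t\rangle, R}\bigl(\text{time coord.\ of next state} \leq m'\bigr) \leq \prod_{k : m' < N_{jb}(k) < t}(1 - \gamma_{N_{jb}(k)}) \leq \exp\!\Bigl(-\sum_{k : m' < N_{jb}(k) < t} \gamma_{N_{jb}(k)}\Bigr).
\]
Going to $\bot$ counts in our favour since $\bot \notin S \times [m]$.

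Next, I would fix a target deviation $T \coloneqq \ln(\ell/\varepsilon)$. Because $\sum_k \gamma_{N_{jb}(k)} = \infty$ almost surely for every $(j,b)$, I can recursively construct, as a function of the sample path, a finite increasing sequence $m = n_0 < n_1 < \cdots < n_{\ell+1}$ such that for every $u \in \{1, \ldots, \ell+1\}$ and every state-action pair $(j,b)$,
\[
\sum_{k : n_{u-1} < N_{jb}(k) \leq n_u} \gamma_{N_{jb}(k)} \geq T.
\]
I then set $n \coloneqq n_{\ell+1}$ and call $I_u \coloneqq (n_{u-1}, n_u]$ for $u \geq 1$ the $u$-th layer, with $I_0 \coloneqq \{1, \ldots, m\}$. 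Applying the one-step drop bound: from any $\langle j, t\rangle$ with $t \in I_u$ and $u \geq 2$, the window $(n_{u-2}, t)$ contains the entire sub-window $(n_{u-2}, n_{u-1}]$, so the $\gamma$-sum over it is at least $T$, and hence the probability under any chosen action that the next time coordinate falls to layer $\leq u - 2$ is at most $e^{-T} = \varepsilon/\ell$.

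To conclude, I would combine a short pigeonhole argument with a union bound. The walk starts at layer $u_0 = \ell + 1$ and the layer index is non-increasing along the trajectory; reaching $I_0 = S \times [m]$ within $\ell$ steps thus requires a total layer-drop of at least $\ell + 1$ achieved in at most $\ell$ transitions, so at some transition the drop must be two or more layers. Such a ``bad'' transition can only occur from a current layer $\geq 2$, where by the previous step its conditional probability is at most $\varepsilon/\ell$ uniformly over the action chosen by $R$. Union-bounding over the $\ell$ transitions yields
\[
\Pr_{\langle i, n\rangle, R}\!\bigl(\tau(S \times [m]) \leq \ell\bigr) \;\leq\; \ell \cdot \frac{\varepsilon}{\ell} \;=\; \varepsilon,
\]
as required.

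The step I expect to be trickiest is the very first one: extracting the clean single-step drop bound from the piecewise definition of $\widehat{P}$ (in particular folding in the absorbing-$\bot$ branch correctly) and verifying that the bound is uniform in the action chosen by $R$, so that one can genuinely take the supremum over all policies $R$ before union-bounding. After that, the layered decomposition together with the pigeonhole argument is essentially a combinatorial skeleton and goes through mechanically.
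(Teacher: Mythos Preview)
Your proof is correct and follows essentially the same approach as the paper: a one-step ``drop'' bound via the product $\prod_k(1-\gamma_{N_{jb}(k)}) \leq \exp\bigl(-\sum_k \gamma_{N_{jb}(k)}\bigr)$, made small using divergence of the $\gamma$-sums, and then iterated over the $\ell$ steps by the Markov property. Your layered decomposition together with the pigeonhole argument is simply a clean, explicit way of carrying out the induction that the paper only sketches and leaves as an exercise.
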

\begin{proof}
    We will argue that $\widehat{P}(\langle i, m \rangle, a, \langle j, m\rangle) \leq \nicefrac{\varepsilon}{\ell}$ where,
    for ease of readability, we focus on an arbitrary $i \in S$ and action $a \in A(i)$. It will be clear how to extend the argument so that $m$ is chosen for the claim to hold uniformly for all such $i$ and $a$. Essentially, the latter is achieved by choosing the most ``restrictive'' state-action pair and the corresponding value of $n$. The claim follows from the above by induction and the Markov property (since the probabilities add up).
    
    Consider some value $n \geq m$.
    By construction of $\widehat{P}$, we have that the probability of hitting $S \times [m]$ in one step, i.e. $\sum_{k=1}^m \widehat{P}(\langle i, m \rangle, a, \langle j, k\rangle)$, is:
    \[
        \left(\prod_{k' > k}^n (1 - \gamma_{N_{ia}(k')})\right)
        \sum_{\ell=1}^{k'-1}
        \left(
        \gamma_{N_{ia}(\ell)}
        \prod_{\ell' > \ell}^{k'-1} (1 - \gamma_{N_{ia}(\ell')})
        \right) \leq \prod_{k' > k}^n (1 - \gamma_{N_{ia}(k')}).
    \]
    Furthermore, note that:
    \begin{equation}\label{eqn:exp-up}
        \prod_{k' > k}^n (1 - \gamma_{N_{ia}(k')}) \leq \exp\left(\sum_{k' > k}^n (1 - \gamma_{N_{ia}(k')})\right)
    \end{equation}
    since $\exp(\cdot)$ is monotonic. Now,
    recall that we have assumed, for all $i \in S$ and all $a \in A(i)$, \(
        \sum^\infty_{k=1} \gamma_{N_{ia}(k)} = \infty
    \) holds with probability $1$.
    Since the $\gamma_t$ are bounded in $[0,1)$ then $\sum_{k' > k}^n (1 - \gamma_{N_{ia}(k')}) = \infty$ with probability $1$ too. In particular, this means that \autoref{eqn:exp-up} tends to $0$ as $n$ goes to infinity. Hence, we can certainly choose $n$ so that \autoref{eqn:exp-up} is bounded above as required.
\end{proof}

\begin{exercise}
    Complete the induction in the proof sketch given above.
\end{exercise}

\subsection{The action-replay process and the unknown MDP}
The second step we take towards proving \autoref{thm:qlearning} consists in establishing a relation between the action-replay process $\mathcal{A} = (\widehat{S},A,\widehat{P},\widehat{r})$ and the unknown dynamics of the MDP. It will be convenient to define a probabilistic transition function induced by $\widehat{P}$ when projected on $S \times A$. Formally, we define $\widehat{P}_t$ as follows.
\[
    \widehat{P}_t : (i,a,j) \mapsto \sum_{s \in \mathbb{N}_{>0}} \widehat{P}(\langle i, t \rangle, a, \langle j, s \rangle)
\]
Note that from any state $\langle i, n\rangle$ in $\mathcal{A}$ and for all actions $a \in A(i)$, all transitions in the process lead to states $\langle j, m \rangle$ such that $m < n$ or to $\bot$. Hence, we also have that the following holds for $\widehat{P}_t$.
\[
\widehat{P}_t(i,a,j) = \sum_{s=1}^{t-1} \widehat{P}(\langle i, t \rangle, a, \langle j, s \rangle)
\]
We also define $\widehat{r}_t$ such that $\widehat{r}_t :(i,a) \mapsto \widehat{r}(\langle i, t\rangle, a, \widehat{j})$ for any $\widehat{j} \in \widehat{S}\setminus \{\bot\}$ for which this is defined. (Note that this is well-defined since $\widehat{r}$ only depends on the target state if the latter is $\bot$.) 
\begin{lemma}\label{lem:arp-sims-mdp}
    For the Q-learning policy $R$ we have: 
    \[
    \Pr_{i,R}\left(\lim_{t \to \infty} \widehat{P}_t = P \text{ and } \lim_{t \to \infty} \widehat{r}_t = r\right) = 1
    \] for all $i \in S$ if the following hold with probability $1$ for all $i \in S$ and all $a \in A(i)$.
    \[
    \sum_{n=1}^\infty \gamma_{N_{ia}(n)} = \infty \text{ and }
    \sum_{n=1}^\infty \left(\gamma_{N_{ia}(n)}\right)^2 < \infty
    \]
\end{lemma}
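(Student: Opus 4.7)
The plan is to recognize that $\widehat{P}_t(i,a,\cdot)$ is produced, as $t$ grows, by exactly a Robbins--Monro stochastic approximation for the true parameter $p_{ij}(a)$, and then to invoke the classical almost-sure convergence theorem for such iterations. The reward part will come essentially for free from the definitions.

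First I would derive a recursion directly from the definition of $\widehat{P}$. If $(X_t,Y_t) \neq (i,a)$, the list $t_1,\dots,t_k$ of previous visit times to $(i,a)$ is unchanged, so $\widehat{P}_{t+1}(i,a,\cdot) = \widehat{P}_t(i,a,\cdot)$. If $(X_t,Y_t) = (i,a)$, the list gains a last entry $t_{k+1}=t$, and a direct manipulation of the product formula yields
\[
  \widehat{P}_{t+1}(i,a,j) \;=\; (1-\gamma_t)\,\widehat{P}_t(i,a,j) \;+\; \gamma_t\,\mathbf{1}_{X_{t+1}=j}.
\]
Re-indexing by visit number, with $\beta_k = \gamma_{N_{ia}(k)}$, $Z_k = \mathbf{1}_{X_{N_{ia}(k)+1}=j}$, and $P_k$ the value of $\widehat{P}_t(i,a,j)$ just after the $k$-th visit, this becomes the textbook Robbins--Monro update $P_{k+1} = (1-\beta_{k+1}) P_k + \beta_{k+1} Z_{k+1}$ with $P_0 = 0$. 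By the strong Markov property, conditional on the history up to (and including) the choice of action $a$ in state $i$ at time $N_{ia}(k+1)$, the indicator $Z_{k+1}$ is Bernoulli with mean $p_{ij}(a)$, so setting $\delta_k = P_k - p_{ij}(a)$ gives a recursion driven by bounded, centred, conditionally independent noise.

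I would then invoke (or briefly prove) the Robbins--Monro theorem: under $\sum_k \beta_k = \infty$ and $\sum_k \beta_k^2 < \infty$, one has $\delta_k \to 0$ almost surely. The standard argument shows that $\delta_k^2 + \sum_{j > k} \beta_j^2$ is a non-negative supermartingale (using $|Z_k|\leq 1$ and orthogonality of the noise with $\delta_{k-1}$), so it converges a.s.\ by Doob's theorem; combined with $\sum_k \beta_k^2 < \infty$ this forces a.s.\ convergence of $\delta_k^2$, and $\sum_k \beta_k = \infty$ rules out any strictly positive limit. A finite union bound over triples $(i,a,j)\in S\times A\times S$ then promotes the pointwise claim to the desired convergence $\widehat{P}_t \to P$ of functions.

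For the reward, the definition of $\widehat{r}$ yields $\widehat{r}_t(i,a) = r_i(a)$ as soon as $(i,a)$ has been played at least once before time $t$, since every non-$\bot$ successor in $\mathcal{A}$ inherits the true immediate reward. The hypothesis $\sum_n \gamma_{N_{ia}(n)} = \infty$ forces $N_{ia}$ to be defined for every $n$ with probability $1$, so $(i,a)$ is visited infinitely often almost surely and hence $\widehat{r}_t(i,a) = r_i(a)$ eventually. The main obstacle is therefore the almost-sure (as opposed to $L^2$) step in the Robbins--Monro argument: the mean-square bound $\mathrm{E}[\delta_{k+1}^2\mid \mathcal{F}_k] \leq (1-\beta_{k+1})^2 \delta_k^2 + \beta_{k+1}^2$ is immediate, but upgrading it to pathwise convergence needs the supermartingale telescope above, which is why I would spell that step out rather than cite only the easier $L^2$ bound.
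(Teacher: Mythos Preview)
The paper does not actually prove this lemma: it only states that Watkins and Dayan establish it via classical stochastic-convergence results, explicitly pointing to the Robbins--Monro algorithm, and remarks that the techniques are beyond the scope of the notes. Your proposal is therefore not competing with a proof in the paper but rather supplying one along precisely the route the paper gestures at.

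Your derivation of the recursion $\widehat{P}_{t+1}(i,a,j) = (1-\gamma_t)\widehat{P}_t(i,a,j) + \gamma_t \mathbf{1}_{X_{t+1}=j}$ from the product formula for $\widehat{P}$ is correct, as is the identification of this as a Robbins--Monro iteration for the mean $p_{ij}(a)$. The reward observation is also right: $\widehat{r}_t(i,a)$ equals $r_i(a)$ by definition as soon as $(i,a)$ has a non-$\bot$ successor in $\mathcal{A}$, i.e.\ after the first visit. One point worth tightening in a full write-up: the step sizes $\beta_k = \gamma_{N_{ia}(k)}$ are random (adapted) rather than deterministic, so the supermartingale you describe should be set up with the filtration $\mathcal{H}_k$ that includes everything up to the $(k{+}1)$-st arrival at $(i,a)$ and the choice of action there, but not the ensuing transition; then $\delta_k$ and $\beta_{k+1}$ are $\mathcal{H}_k$-measurable while $Z_{k+1}$ has conditional mean $p_{ij}(a)$. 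With that filtration the Robbins--Siegmund almost-supermartingale lemma applies directly and yields the a.s.\ convergence you claim. This is a cosmetic refinement; the approach is sound and matches what the paper cites.
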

\noindent We do not present a proof of this claim. Watkins and Dayan give a proof of it using classical \emph{stochastic convergence} results (see, e.g. the Robbins-Monroe algorithm~\cite{kc12}). There seem to exist alternative arguments using ordinary differential equations. However, these techniques go beyond the scope of these notes.

\subsection{Putting everything together}

\autoref{lem:arp-sims-mdp} essentially allows us to argue that we get increasingly better estimates of the desired Q values via the optimal values in the MDP induced by $\widehat{P}_t$. Below, we write $\widehat{Q}_t$ to denote the optimal Q values in the MDP $(S,A,\widehat{P}_t,\widehat{r}_t)$ induced by $\widehat{P}_t$ and $\widehat{r}_t$.
\begin{lemma}\label{lem:qlearn-trick}
    Let $\varepsilon \in (0,1)$ and $R$ be the Q-learning policy. Then there exists $T \in \mathbb{N}_{>0}$ such that $\Pr_{i,R}\left(\forall t \geq T: \dtv{\widehat{Q}_t}{Q^*} > \varepsilon\right) \leq \varepsilon$ for all $i \in S$ if the following hold with probability $1$ for all $i \in S$ and all $a \in A(i)$.
    \[
    \sum_{n=1}^\infty \gamma_{N_{ia}(n)} = \infty \text{ and }
    \sum_{n=1}^\infty \left(\gamma_{N_{ia}(n)}\right)^2 < \infty
    \]
\end{lemma}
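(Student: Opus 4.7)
The plan is to combine the almost-sure convergence $(\widehat{P}_t,\widehat{r}_t) \to (P,r)$ given by \autoref{lem:arp-sims-mdp} with the robustness bound of \autoref{thm:disc-robust}, lifted from a single fixed stationary policy to the optimal Q-values. In a finite-state discounted MDP the optimal values are attained by deterministic stationary policies. Hence, if the hypotheses \autoref{itm:supp}, \autoref{itm:drat-prob} and \autoref{itm:dtv-reward} hold for the two MDPs $(S,A,P,r)$ and $(S,A,\widehat{P}_t,\widehat{r}_t)$ with parameter $\varepsilon'$, then taking $\pi$ to be optimal in one model and applying \autoref{thm:disc-robust} bounds its value in the other model, which is at most the optimum there; swapping the roles of the two models yields a two-sided bound
\[
    \bigl|V_t^*(j) - V^*(j)\bigr| \leq \frac{\varepsilon'(1+\norm{r}{\infty})}{2(1-\alpha)}
    \quad \text{for all } j \in S.
\]
A one-step Bellman calculation using \autoref{itm:drat-prob} and \autoref{itm:dtv-reward} then propagates this into a bound $\dtv{\widehat{Q}_t}{Q^*} \leq C \varepsilon'$ for an explicit constant $C = C(\alpha,\norm{r}{\infty},|S|)$.

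Given $\varepsilon$, I would first pick $\varepsilon'$ with $C\varepsilon' \leq \varepsilon$ and then translate the three hypotheses of \autoref{thm:disc-robust} into events on $(\widehat{P}_t,\widehat{r}_t)$: (i) the support of $\widehat{P}_t$ coincides with that of $P$; (ii) $\dtv{\widehat{r}_t}{r} \leq \varepsilon'/2$; and (iii) $\drat{\widehat{P}_t}{P} \leq \varepsilon'/(8|S|)$, which by \autoref{lem:rel-dists} follows once (i) holds from $\dtv{\widehat{P}_t}{P} \leq \pmin \varepsilon'/(8|S|)$. Each of these three events holds for all sufficiently large $t$ with probability one: by inspection of the definition, $\widehat{P}_t(i,a,j)$ can be positive only if the transition $(i,a)\to j$ has actually been observed, so its support is always contained in that of $P$, and the reverse inclusion holds eventually almost surely because under the Robbins--Monro hypothesis every state-action pair is visited infinitely often; the quantitative closeness of $\widehat{P}_t,\widehat{r}_t$ to $P,r$ then comes directly from \autoref{lem:arp-sims-mdp}, pointwise convergence on a finite domain being automatically uniform.

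To pass from ``almost surely eventually'' to ``there exists a deterministic $T$ with failure probability at most $\varepsilon$'', I would invoke continuity of probability from below: letting $A_T$ be the event that all three conditions above hold at every $t \geq T$, the sequence $(A_T)_{T\geq 1}$ is increasing and $\Pr_{i,R}\bigl(\bigcup_T A_T\bigr) = 1$, so $\Pr_{i,R}(A_T) \geq 1-\varepsilon$ for $T$ large enough. Combining this with the deterministic implication ``the three conditions at time $t$ imply $\dtv{\widehat{Q}_t}{Q^*} \leq \varepsilon$'' yields the claim.

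The main obstacle I expect is the lift from \autoref{thm:disc-robust} to a statement about optimal Q-values, since the theorem is stated for a single fixed stationary policy while here I need a uniform guarantee across (at least) the two models' respective optimal policies. The sketch above handles this by applying the theorem twice with swapped roles, but care is needed: hypothesis \autoref{itm:supp} must hold for both of those $\pi$ simultaneously, which is why the eventual support agreement of $\widehat{P}_t$ and $P$ must be established uniformly in the policy rather than only for the specific $\pi$ being considered. A secondary minor point is the Bellman propagation from $V^*$-closeness to $Q^*$-closeness, but this is a routine one-step expansion that introduces only an $O(|S| \cdot \norm{r}{\infty})$ factor into the constant $C$.
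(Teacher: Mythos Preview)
Your proposal is correct and follows essentially the same route as the paper: invoke \autoref{lem:arp-sims-mdp} to get $(\widehat{P}_t,\widehat{r}_t)\to(P,r)$ almost surely, then feed this into the robustness bound of \autoref{thm:disc-robust} to control $\dtv{\widehat{Q}_t}{Q^*}$, and finally pass from almost-sure eventual closeness to a deterministic $T$ with failure probability at most $\varepsilon$. In fact you are more explicit than the paper on two points it glosses over: the paper simply asserts that \autoref{thm:disc-robust} yields closeness of the optimal $Q$-values without spelling out the two-sided application over the respective optimal deterministic policies and the one-step Bellman propagation, and it is looser about the choice of $T$ (it asks that all state-action pairs are visited by time $T$ ``with probability $1$'', whereas your continuity-from-below argument is the clean way to extract a deterministic $T$). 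Your concern about \autoref{itm:supp} needing to hold uniformly for every stationary policy is well placed and is exactly why one argues support equality of $\widehat{P}_t$ and $P$ directly (as you do), rather than for a specific $\pi$.
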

\begin{proof}
    We choose $T$ so that $N_{ia}(1) \leq T$ with probability $1$ for all $i \in S$ and $a \in A(i)$. In words, this means that all state-action pairs have almost-surely been witnessed. (This happens with probability $1$ due to our assumptions.) Additionally, we ask that:
    \[
    \Pr_{i,R}\left(\forall t \geq T: \drat{\widehat{P}_t}{P}, \dtv{\widehat{r}_t}{r} \leq \frac{\varepsilon \exp(-2|S|)}{4W(1-\alpha)}\right) \geq 1 - \varepsilon.
    \]
    The latter is guaranteed to hold for a large enough value of $t$ because of \autoref{lem:arp-sims-mdp}. It then follows from \autoref{thm:disc-robust} that, with probability at least $1 - \varepsilon$, the values $\widehat{Q}_t$ and $Q^*$ have a difference of at most $\varepsilon$ as required.
\end{proof}

As a final stepping stone towards the theorem, we prove an analogue of the
above lemma which completes the link between the action-replay process and the
Q values.
\begin{lemma}\label{lem:qlearn-trick2}
    Let $\varepsilon \in (0,1)$ and $R$ be the Q-learning policy. Then there
    exists $T \in \mathbb{N}_{>0}$ such that $\Pr_{i,R}\left(\forall t \geq T:
    \dtv{Q^{(t)}}{Q^*} > \varepsilon\right) \leq \varepsilon$ for all $i
    \in S$ if the following hold with probability $1$ for all $i \in S$ and
    all $a \in A(i)$.
    \[
    \sum_{n=1}^\infty \gamma_{N_{ia}(n)} = \infty \text{ and }
    \sum_{n=1}^\infty \left(\gamma_{N_{ia}(n)}\right)^2 < \infty
    \]
\end{lemma}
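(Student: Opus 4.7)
The plan is to combine \autoref{lem:arp-qvalues} with \autoref{lem:qlearn-trick} via a truncation-and-coupling argument inside the action-replay process. By \autoref{lem:arp-qvalues}, $Q^{(t)}(i,a) = Q^*_{\mathcal{A}}(\langle i,t\rangle,a)$, so it suffices to show that, with probability at least $1-\varepsilon/2$ for all sufficiently large $t$, one has $\bigl|Q^*_{\mathcal{A}}(\langle i,t\rangle,a) - \widehat{Q}_t(i,a)\bigr| \leq \varepsilon/2$; the conclusion then follows by the triangle inequality from \autoref{lem:qlearn-trick} applied with tolerance $\varepsilon/2$.

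To establish the intermediate bound I would proceed in three calibrations. First, fix a horizon $\ell$ large enough that $\alpha^\ell W/(1-\alpha) \leq \varepsilon/8$, so that the discounted tail of any trajectory beyond step $\ell$ is negligible. Second, invoke \autoref{lem:arp-sims-mdp} to pick a level threshold $m$ such that, with probability at least $1-\varepsilon/4$, both $\drat{\widehat{P}_s}{P}$ and $\dtv{\widehat{r}_s}{r}$ stay below some $\eta$ for all $s \geq m$; choose $\eta$ small enough that \autoref{thm:disc-robust}, applied pairwise among the MDPs $(S,A,\widehat{P}_s,\widehat{r}_s)$ for $s \geq m$ and $\mathcal{M}$ itself, gives a uniform $\varepsilon/8$-gap between any two of their optimal value functions. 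Third, by \autoref{lem:arp-nodrop}, pick $T \geq m$ such that for every $t \geq T$ and every policy in $\mathcal{A}$, the trajectory starting at $\langle i,t\rangle$ stays strictly above level $m$ for at least $\ell$ steps with probability at least $1-\varepsilon/4$. On the intersection of these good events, the first $\ell$ transitions and rewards encountered in $\mathcal{A}$ are drawn from $\widehat{P}_s,\widehat{r}_s$ with $s\in[m,t]$, all within $2\eta$ of each other and, in particular, of $\widehat{P}_t,\widehat{r}_t$. A step-by-step discounted coupling in the spirit of the proof of \autoref{thm:disc-robust} then shows that the optimal discounted-reward trajectory in $\mathcal{A}$ from $\langle i,t\rangle$ and its counterpart in $(S,A,\widehat{P}_t,\widehat{r}_t)$ agree to within $\varepsilon/4$ over the first $\ell$ steps, with the two tails contributing at most $\varepsilon/8$ each; maximising over policies on both sides yields the bound on $\bigl|Q^*_{\mathcal{A}}(\langle i,t\rangle,a) - \widehat{Q}_t(i,a)\bigr|$.

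The main obstacle is the non-stationarity built into $\mathcal{A}$: a single action can drop the level from $t$ down to any $s<t$, so consecutive transitions are governed by different $\widehat{P}_s$, and the optimal policy in $\mathcal{A}$ has no reason to agree with an optimal stationary policy in $(S,A,\widehat{P}_t,\widehat{r}_t)$. The truncation trick avoids having to compare $\widehat{P}_s$ and $\widehat{P}_t$ for $s$ far below $m$ -- discounting absorbs whatever happens once the level has dropped -- but one must be careful that the $\eta$ from \autoref{thm:disc-robust} is chosen so the perturbation bound holds \emph{simultaneously} across the entire random sequence of levels visited, and that the coupling is set up to track the level each step rather than the wall-clock time, since these two clocks differ in $\mathcal{A}$.
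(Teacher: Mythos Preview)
Your plan is correct and follows the same skeleton as the paper's proof: a discounted-tail cutoff at some horizon, \autoref{lem:arp-nodrop} to guarantee the level in $\mathcal{A}$ stays high for that many steps, and then \autoref{lem:qlearn-trick} (equivalently, \autoref{lem:arp-sims-mdp} plus \autoref{thm:disc-robust}) to compare with $Q^*$. The paper's argument is terser---it merges your horizon $\ell$ and level threshold $m$ into a single parameter and does not spell out the coupling---so your three-calibration decomposition is a cleaner rendering of the same idea; just note that the proof of \autoref{thm:disc-robust} is via rational functions rather than a step-by-step coupling, so the finite-horizon perturbation bound you invoke would need its own (elementary) inductive argument rather than a literal appeal to that proof.
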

Note that the only difference with respect to \autoref{lem:qlearn-trick} is
that the total-variation bound applies to $Q^{(t)}$ and $Q^*$ instead of
$\widehat{Q}_t$ and the latter. Intuitively, policies in $(S,A,\widehat{P}_t,\widehat{r}_t)$ can be seen as playing in the action-replay process for some steps and then following an arbitrary policy. The proof below formalizes this intuition.
\begin{proof}
    To begin, we observe that the total-discounted-rewards function has a
    ``bounded cutoff'' property. Formally, we have that for all $\eta$ there
    exists $T \in \mathbb{N}$ such that:
    \begin{equation}\label{eqn:cutoff}
    v_i^\alpha(R) - \Ex \left[ \sum_{t=1}^T \alpha^{t-1} r_{X_t}(Y_t) \right] \leq \eta
    \end{equation}
    holds for all $i \in S$ and all policies $R$. Since we have assumed
    bounded rewards (with absolute value of at most $W$), the minimal such $T$
    is easy to compute. Indeed, we only need to solve for $T$ in
    $\nicefrac{W}{1-\alpha^{T-1}} \leq \eta$. Henceforth, let $T$ be such that
    \autoref{eqn:cutoff} holds for $\eta = \nicefrac{\varepsilon}{2}$.

    We now further constraint $T$ so that we can apply \autoref{lem:arp-nodrop} for $\nicefrac{\varepsilon}{2}$. Formally, let $T' \geq T$ be such that $\Pr_{\langle i, n\rangle,R}(\tau(S \times [T]) \leq \ell) \leq \nicefrac{\varepsilon}{2}$ for all $n \geq T'$. Observe that, from our assumptions and \autoref{lem:arp-nodrop}, such a $T'$ necessarily exists. Together with the cutoff property proved above and \autoref{lem:qlearn-trick} (adapted for $\nicefrac{\varepsilon}{2}$), this implies the desired result. 
\end{proof}

\autoref{thm:qlearning} is then easy to prove using the above lemma and a similar argument to the one we gave for \autoref{lem:uni-strats}.

\section*{Acknowledgements}
I would like to thank Benny Van Houdt, Ga\"etan Staquet, and Jef Winant for useful feedback on early versions of these notes. 
Additionally, I am grateful to Nils Charlet and Sarah Leyder for having carefully checked the assumptions in the robustness theorems. 
Finally, I also thank Alexander Belooussov and Jolan Depreter for having reported several typos.

\paragraph{Your name could also be here} if you find a typo and report it to the author. Alternatively, you could find shorter, more elegant proofs for any of the claims. Proofs for the claims which have not been proved here are of special interest!

\bibliographystyle{alpha}
\bibliography{refs}

\end{document}